\documentclass{article} 
%
\usepackage{iclr2019_conference,times}


\usepackage{amsmath,amsfonts,bm}









\def\eqref#1{equation~\ref{#1}}









\def\1{\bm{1}}








\def\vb{{\bm{b}}}

\def\vu{{\bm{u}}}

\def\vx{{\bm{x}}}



\def\mU{{\bm{U}}}
\def\mV{{\bm{V}}}
\def\mW{{\bm{W}}}

\DeclareMathAlphabet{\mathsfit}{\encodingdefault}{\sfdefault}{m}{sl}
\SetMathAlphabet{\mathsfit}{bold}{\encodingdefault}{\sfdefault}{bx}{n}




\def\sP{{\mathbb{P}}}

\def\sR{{\mathbb{R}}}
\def\sS{{\mathbb{S}}}

\def\sX{{\mathbb{X}}}








\newcommand{\KL}{D_{\mathrm{KL}}}



\DeclareMathOperator*{\argmax}{arg\,max}

\DeclareMathOperator{\sign}{sign}

\iclrfinalcopy

\usepackage{hyperref}
\usepackage{url}
\usepackage{amsmath}
\usepackage{algorithm}
\usepackage[noend]{algpseudocode}
\algblockdefx{MRepeat}{EndRepeat}{\textbf{repeat}}{}
\algnotext{EndRepeat}
\usepackage{amsthm}  
\usepackage{csquotes,booktabs}
\usepackage{thm-restate}
\usepackage{graphicx}
\usepackage{soul}


\newcommand{\dataset}{{\cal D}}

\newcommand{\fracpartial}[2]{\frac{\partial #1}{\partial  #2}}
\newcommand{\iid}{\stackrel{\textnormal{i.i.d}}{\sim}}
\def\one{\mbox{1\hspace{-4.25pt}\fontsize{12}{14.4}\selectfont\textrm{1}}}
\newcommand{\twopartdef}[3]{%
  \left\{
    \begin{array}{ll}
      #1 & \mbox{if } #2 \\
      #3 & \mbox{otherwise}
    \end{array}
  \right.
}

\newtheorem{lemma}{Lemma}

\makeatletter
\newcommand{\algmargin}{\the\ALG@thistlm}
\makeatother
\newlength{\whilewidth}
\settowidth{\whilewidth}{\algorithmicwhile\ }
\algdef{SE}[parWHILE]{parWhile}{EndparWhile}[1]
  {\parbox[t]{\dimexpr\linewidth-\algmargin}{%
     \hangindent\whilewidth\strut\algorithmicwhile\ #1\ \algorithmicdo\strut}}{\algorithmicend\ \algorithmicwhile}%
\algnewcommand{\parState}[1]{\State%
  \parbox[t]{\dimexpr\linewidth-\algmargin}{\strut #1\strut}}

\begin{document}

\title{Unsupervised Learning of the Set of Local Maxima}

\author{Lior Wolf \\
Facebook AI Research \& \\
The School of Computer Science\\
Tel Aviv University\\
\texttt{wolf@fb.com}, \texttt{wolf@cs.tau.ac.il} \\
\And
Sagie Benaim \& Tomer Galanti\\
The School of Computer Science\\
Tel Aviv University\\
\texttt{sagieb@mail.tau.ac.il} \\
\texttt{tomerga2@post.tau.ac.il} \\
}

\maketitle

\begin{abstract}
This paper describes a new form of unsupervised learning, whose input is a set of unlabeled points that are assumed to be local maxima of an unknown value function $v$ in an unknown subset of the vector space. Two functions are learned: (i) a set indicator $c$, which is a binary classifier, and (ii) a comparator function $h$ that given two nearby samples, predicts which sample has the higher value of the unknown function $v$. Loss terms are used to ensure that all training samples $\vx$ are a local maxima of $v$, according to $h$ and satisfy $c(\vx)=1$. Therefore, $c$ and $h$ provide training signals to each other: a point $\vx'$ in the vicinity of $\vx$ satisfies $c(\vx)=-1$ or is deemed by $h$ to be lower in value than $\vx$. We present an algorithm, show an example where it is more efficient to use local maxima as an indicator function than to employ conventional classification, and derive a suitable generalization bound. Our experiments show that the method is able to outperform one-class classification algorithms in the task of anomaly detection and also provide an additional signal that is extracted in a completely unsupervised way.
\end{abstract}

\section{Introduction}
\label{sec:intro}
\begin{displayquote}
...from so simple a beginning endless forms most beautiful and most wonderful have been, and are being, evolved.~\citep{darwin1859}
\end{displayquote}

When we observe the natural world, we see the ``most wonderful'' forms. We do not observe the even larger quantity of less spectacular forms and we cannot see those forms that are incompatible with existence. In other words, each sample we observe is the result of optimizing some fitness or value function under a set of constraints: the alternative, lower-value, samples are removed and the samples that do not satisfy the constraints are also missing. 

The same principle also holds at the sub-cellular level. For example, a gene can have many forms. Some of them are completely synonymous, while others are viable alternatives.  The gene forms that become most frequent are those which are not only viable, but which also minimize the energetic cost of their expression~\citep{farkas2018hsp70}. For example, the genes that encode proteins comprised of amino acids of higher availability or that require lower expression levels to achieve the same outcome have an advantage. One can expect to observe most often the gene variants that: (i) adhere to a set of unknown constraints (``viable genes''), and (ii) optimize an unknown value function that includes energetic efficiency considerations.

The same idea, of mixing constraints with optimality, also holds for man-made objects. Consider, for example, the set of houses in a given neighborhood. Each architect optimizes the final built form to cope with various aspects, such as the maximal residential floor area, site accessibility, parking considerations, the energy efficiency of the built product, etc. What architects find most challenging, is that this optimization process needs to correspond to a comprehensive set of state and city regulations that regard, for example, the proximity of the built mass of the house to the lot's boundaries, or the compliance of the egress sizes with current fire codes.

In another instance, consider the weights of multiple neural networks trained to minimize the same loss on the same training data, each using a different random initialization. Considering the weights of each trained neural network as a single vector in a sample domain, also fits into the framework of local optimality under constraints. By the nature of the problem, the obtained weights are the local optimum of some loss optimization process. In addition, the weights are sometimes subject to constraints, e.g., by using weight normalization. 

The task tackled in this paper is learning the value function and the constraints, by observing only the local maxima of the value function among points that satisfy the constraints. This is an unsupervised problem: no labels are given in addition to the samples. 

The closest computational problem in the literature is {\em one-class classification}~\citep{1993STIN...9324043M}, where one learns a classifier $c$ in order to model a set of unlabeled training samples, all from a single class. In our formulation, two functions are learned: a classifier and a separate value function. Splitting the modeling task between the two, a simpler classifier can be used (we prove this for a specific case) and we also observe improved empirical performance. In addition, we show that the value function, which is trained with different losses and structure from those of the classifier, models a different aspect of the training set. For example, if the samples are images from a certain class, the classifier would capture class membership and the value function would encode image quality. The emergence of a quality model makes sense, since the training images are often homogeneous in their quality. 

The classifier $c$ and the value function $v$  provide training signals to each other, in an unsupervised setting, somewhat similar to the way adversarial training is done in GANs~\citep{goodfellow2014generative}, although the situation between $c$ and $v$ is not adversarial. Instead, both work collaboratively to minimize similar loss functions. Let $\sS$ be the set of unlabeled training samples from a space $\sX$. Every $\vx\in \sS$ satisfies $c(\vx)=1$ for a classifier $c:\sX \to \{\pm 1\}$ that models the adherence to the set of constraints (satisfies or not). Alternatively, we can think of $c$ as a class membership function that specifies, if a given input is within the class or not. In addition, we also consider a value function $v$, and for every point $\vx'$, such that $\|\vx'-\vx\|\leq\epsilon$, for a sufficiently small $\epsilon>0$, we have: $v(\vx')<v(\vx)$. 

This structure leads to a co-training of $v$ and $c$, such that every point $\vx'$ in the vicinity of $\vx$ can be used either to apply the constraint $v(\vx')<v(\vx)$ on $v$, or as a negative training sample for $c$. Which constraint to apply, depends on the other function: if $c(\vx')=1$, then the first constraint applies; if $v(\vx')\geq v(\vx)$, then $\vx'$ is a negative sample for $c$. Since the only information we have on $v$ pertains to its local maxima, we can only recover it up to an unknown monotonic transformation. We therefore do not learn it directly and instead learn a comparator function $h$, which given two inputs, returns an indication which input has the higher value in $v$.

An alternative view of the learning problem we introduce considers the value function $v$ (or equivalently $h$) as part of a density estimation problem, and not as part of a multi-network game. In this view, $c$ is the characteristic function (of belonging to the support) and $h$ is the comparator of the probability density function (PDF).

\section{Related Work}

The input to our method is a set of unlabeled points. The goal is to model this set. This form of input is shared with the family of methods called one-class
classification. The main application of these methods is anomaly detection, i.e., identifying an outlier, given a set of mostly normal (the opposite of abnormal) samples~\citep{chandola2009anomaly}. 

The literature on one class classification can be roughly divided into three parts. The first includes the classical methods, mostly kernel-base methods, which were applying regularization in order to model the in-class samples in a tight way~\citep{Scholkopf:2001:ESH:1119748.1119749}. The second group of methods, which follow the advent of neural representation learning, employ classical one-class methods to representations that are learned in an unsupervised way~\citep{hawkins2002outlier,sakurada2014anomaly,7410534,xu2015learning,erfani2016high}, e.g., by using autoencoders. Lastly, a few methods have attempted to apply a suitable one-class loss, in order to learn a neural network-based representation from scratch~\citep{pmlr-v80-ruff18a}. This loss can be generic or specific to a data domain. Recently, ~\cite{NIPS2018_8183} achieved state of the art one-class results for visual datasets by training a network to predict the predefined image transformation that is applied to each of the training images. A score is then used to evaluate the success of this classifier on test images, assuming that out of class images would be affected differently by the image transformations.

Despite having the same structure of the input (an unlabeled training set), our method stands out of the one-class classification and anomaly detection methods we are aware of, by optimizing a specific model that disentangles two aspects of the data: one aspect is captured by a class membership function, similar to many one-class approaches; the other aspect compares pairs of samples. This dual modeling captures the notion that the samples are not nearly random samples from some class, but also the local maximum in this class. While ``the local maxima of in-class points'' is a class by itself, a classifier-based modeling of this class would require a higher complexity than a model that relies on the structure of the class as pertaining to local maxima, as is proved, for one example, in Sec.~\ref{sec:analysis}. In addition to the characterization as local maxima, the factorization between the constraints and the values also assists modeling. This is reminiscent of many other cases in machine learning, where a divide and conquer approach reduces complexity. For example, using prior knowledge on the structure of the problem, helps to reduce the complexity in hierarchical models, such as LDA~\citep{Blei03latentdirichlet}. 

While we use the term ``value function'', and this function is learned, we do not operate in a reinforcement learning setting, where the term value is often used. Specifically, our problem is not inverse reinforcement learning~\citep{ng2000algorithms} and we do not have actions, rewards, or policies.

\section{Method}

Recall that $\sS$ is the set of unlabeled training samples, and that we learn two functions $c$ and $v$ such that for all $\vx \in \sS$ it holds that: (i) $c(\vx)=1$, and (ii) $\vx$ is a local maxima of $v$.

For every monotonic function $f$, the setting we define cannot distinguish between $v$, and $f\circ v$. This ambiguity is eliminated, if we replace $v$ by a binary function $h$ that satisfies $h(\vx, \vx') = 1$ if $v(\vx) \geq v(\vx')$ and $h(\vx,\vx') = -1$ otherwise. We found that training $h$ in lieu of $v$ is considerably more stable. Note that we do not enforce transitivity, when training $h$, and, therefore, $h$ can be such that no underlying $v$ exists. 

\subsection{Training $c$ and $h$}

When training $c$, the training samples in $\sS = \{\vx_i\}^{m}_{i=1}$ are positive examples. Without additional constraints, the recovery of $c$ is an ill-posed problem. For example,~\cite{pmlr-v80-ruff18a} add an additional constraint on the compactness of the representation space. Here, we rely on the ability to generate hard negative points\footnote{``hard negative'' is a terminology often used in the object detection and boosting literature, which means negative points that challenge the training process.}. There are two generators $G_c$ and $G_h$, each dedicated to generating negative training points to either $c$ or $h$, as described in Sec.~\ref{sec:negative} below. 

The two generators are conditioned on a positive point $\vx\in \sS$ and each generates one negative point per each $\vx$: $\vx' = G_c(\vx)$ and $\vx''=G_h(\vx)$. 
The constraints on the negative points are achieved by multiplying two losses: one pushing $c(\vx')$ to be negative, and the other pushing $h(\vx'',\vx)$ to be negative. 

Let $\ell(p,y) := -\frac{1}{2}((y+1)\log(p)+(1-y)\log(1-p))$ be the binary cross entropy loss for $y\in\{\pm 1\}$. $c$ and $h$ are implemented as neural networks trained to minimize the following losses, respectively: 
\begin{align}
\mathcal{{L}}_C & := \frac{1}{m} \sum_{\vx\in \sS}\ell(c(\vx), 1) + \frac{1}{m} \sum_{\vx\in \sS} \ell(c(G_{c}(\vx)), -1) \cdot \ell(h(G_c(\vx), \vx), -1)  \\ 
\mathcal{L}_H & := \frac{1}{m} \sum_{\vx\in \sS} \ell(h(\vx,\vx), 1)  + \frac{1}{m} \sum_{\vx\in \sS}\ell(c(G_{h}(\vx)), -1) \cdot \ell(h(G_h(\vx), \vx), -1) 
\end{align}

The first sum in $\mathcal{L}_C$ ensures that $c$ classifies all positive points as positive. The second sum links the outcome of $h$ and $c$ for points generated by $G_c$. It is given as a multiplication of two losses. This multiplication  encourages $c$ to focus on the cases where $h$ predicts with a higher probability that the point $G_c(\vx)$ is more valued than $\vx$.  

The first term of $\mathcal{{L}}_C$ (respectively $\mathcal{{L}}_H$) depends on $c$'s (respectively $h$'s) parameters only. The second term of $\mathcal{{L}}_C$ (respectively $\mathcal{{L}}_H$), however, depends on both $h$'s and $c$'s parameters as well as $G_c$'s (respectively $G_h$'s) parameters.

The loss $\mathcal{L}_H$ is mostly similar. It ensures that $h$ has positive values when the two inputs are the same, at least at the training points. In addition, it ensures that for the generated negative points $\vx'$, $h(\vx',\vx)$ is $-1$
, especially when $c(\vx')$ is high. 

One can alternatively use a symmetric $\mathcal{L}_H$, by including an additional term $\frac{1}{m} \sum_{\vx\in \sS}\ell(c(G_{h}(\vx)), -1) \cdot \ell(h(\vx,G_h(\vx)), 1)$. This, in our experiments, leads to very similar results, and we opt for the slightly simpler version.

\subsection{Negative Point Generation \label{sec:negative}}

We train two generators, $G_c$ and $G_h$, to produce hard negative samples for the training of $c$ and $h$, respectively. The two generators both receive a point $\vx \in \sS$ as input, and generate another point in the same space $\sX$. They are constructed using an encoder-decoder architecture, see Sec.~\ref{sec:arch} for the exact specifications.

When training $G_c$, the loss $-\mathcal L_C$ is minimized. In other words, $G_c$ finds, in an adversarial way, points $x'$, that maximize the error of $c$ (the first term of $\mathcal L_C$ does not involve $G_c$ and does not contribute, when training $G_c$). 

$G_h$ minimizes during training the loss $\frac{\lambda}{m}\sum_{\vx \in \sS} ||\vx-G_h(\vx)||-\mathcal L_H$, for some parameter $\lambda$. Here, in addition to the adversarial term, we add a term that encourages $G_h(\vx)$ to be in the vicinity of $\vx$. This is added, since the purpose of $h$ is to compare nearby points, allowing for the recovery of points that are local maxima.  In all our experiments we set $\lambda=1$.

The need for two generators, instead of just one, is verified in our ablation analysis, presented in Sec.~\ref{sec:exp}. One may wonder why two are needed. One reason stems from the difference in the training loss: $h$ is learned locally, while $c$ can be applied anywhere. In addition, $c$ and $h$ are challenged by different points, depending on their current state during training. By the structure of the generators, they only produce one point per input $\vx$, which is not enough to challenge both $c$ and $h$.

\subsection{Training Procedure}

\begin{algorithm}[t]
\caption{Training $c$ and $h$}
\label{algo:ch}
  \begin{algorithmic}[1]
  \Require{$\sS$: positive training points; $\lambda$: a trade-off parameter; $T$: number of epochs.}
  	\State Initialize $c$, $h$, $G_c$ and $G_h$ randomly.
    \For{$i =1,...,T$}
      \parState{%
        Train $G_c$ for one epoch to minimize $-\mathcal L_C$ }
       \parState{%
        Train $c$ for one epoch to minimize $\mathcal L_C$ }
        \parState{%
        Train $G_h$ for one epoch to minimize $\frac{\lambda}{m}\sum_{\vx \in \sS} ||\vx-G_h(\vx)||-\mathcal{L}_H$ }
       \parState{%
        Train $h$ for one epoch to minimize $\mathcal{L}_H$ }
    \EndFor 
    \parState{\Return $c$, $h$} 
  \end{algorithmic}
\end{algorithm}

The training procedure follows the simple interleaving scheme presented in Alg.~\ref{algo:ch}. We train the networks in turns: $G_c$ and then $c$, followed by $G_h$ and then $h$. Since the datasets in our experiments are relatively small, each turn is done using all mini-batches of the training dataset $\sS$. The ADAM optimization scheme is used with mini-batches of size 32.

The training procedure has self regularization properties. For example, assuming that $G_h(\vx)\neq \vx$, $\mathcal L_H$ as a function of $h$, has a trivial global minima. This solution is to assign $h(\vx',\vx)$ to 1 iff $\vx'=\vx$. However, for this specific $h$, the only way for $G_h$ to maximize $L_H$ is to rely on $c$ and $h$ being smooth and to select points $\vx'=G_h(\vx)$ that converge to $\vx$, at least for some points in $\vx \in \sS$. In this case, both $\ell(c(G_{h}(\vx)), -1)$ and $\ell(h(G_h(\vx), \vx), -1)$ will become high, since $c(\vx') \approx 1$ and $h(\vx',\vx)\approx 1$.

\subsection{Architecture}
\label{sec:arch}

In the image experiments (MNIST, CIFAR10 and GTSRB), the neural networks $G_h$ and $G_c$ employ the DCGAN architecture of~\cite{dcgan}. This architecture consists of an encoder-decoder type structure, where both the encoder and the decoder have five blocks. Each encoder (resp. decoder) block consists of a 2-strided convolution (resp. deconvolution) followed by a batch norm layer, and a ReLU activation. The fifth decoder block consists of a 2-strided convolution followed by a tanh activation instead. $c$ and $h$'s architectures consist of four blocks of the same structure as for the encoder. This is followed by a sigmoid activation. 

For the Cancer Genome Atlas experiment, each encoder (resp. decoder) block consists of a fully connected (FC) layer, a batch norm layer and a Leaky Relay activation (slope of 0.2). Two blocks are used for the encoder and decoder. The encoder's first FC layer reduces the dimension to 512 and the second to 256. The decoder is built to mirror this. $c$ and $h$ consist of two blocks, where the first FC layer reduces the dimension to 512 and the second to 1. This is followed by a sigmoid activation. 

\subsection{Analysis}
In Appendix~\ref{sec:analysis}, We show an example in which modeling using local-maxima-points is an efficient way to model, in comparison to the conventional classification-based approach. We then extend the framework of spectral-norm bounds, which were derived in the context of classification, to the case of unsupervised learning using local maxima.

\section{Experiments}
\label{sec:exp}
Since we share the same form of input with one-class classification, we conduct experiments using one-class classification benchmarks. These experiments both help to understand the power of our model in capturing a given set of samples, as well as study the properties of the two underlying functions $c$ and $h$. 

Following acceptable benchmarks in the field, specifically the experiments done by~\cite{pmlr-v80-ruff18a}, we consider single classes out of multiclass benchmarks, as the basis of one-class problems. For example, in MNIST, the set $\sS$ is taken to be the set of all training images of a particular digit. When applying our method, we train $h$ and $c$ on this set. To clarify: there are no negative samples during training.

Post training, we evaluate both $c$ and $h$ on the one class classification task: positive points are now the MNIST test images of the same digit used for training, and negative points are the test images of all other digits. This is repeated ten times, for digits 0--9. In order to evaluate $h$, which is a binary function, we provide it with two replicas of the test point.

The classification ability is evaluated as the AUC obtained on this classification task. The same experiment was conducted for CIFAR-10 where instead of digits we consider the ten different class labels. The results are reported in Tab.~\ref{tab:mnist}, which also states the literature baseline values reported by~\cite{pmlr-v80-ruff18a}. As can be seen, for both CIFAR-10 and MNIST, $c$ strongly captures class-membership, outperforming the baseline results in most cases. $h$ is less correlated with class membership, resulting in much lower mean AUC values and higher standard deviations. However, it should not come as a surprise that $h$ does contain such information. 

Indeed,  the difference in shape (single input vs. two inputs) between $c$ and $h$ makes them different but not independent. $c$, as a classifier, strongly captures class membership. We can expect $h$, which compares two samples, to capture relative properties. In addition, $h$, due to the way negative samples are collected, is expected to model local changes, at a finer resolution than $c$. Since it is natural to expect that the samples in the training set would provide images that locally maximize some clarity score, among all local perturbations, one can expect quality to be captured by $h$. 

To test this hypothesis, we considered positive points to be test points of the relevant one-class, and negative points to be points with varying degree of Gaussian noise added to them. We then measure using AUC, the ability to distinguish between these two classes.

As can be seen in Fig.~\ref{fig:noize_ones}, $h$ is much better at identifying noisy images than $c$, for all noise levels. This property is class independent, and in Fig.~\ref{fig:noize_full} (Appendix~\ref{fig:mofig}), we repeat the experiment for all test images (not just from the one class used during training), observing the same phenomenon.

\begin{table}[t]
\caption{One class experiments on the MNIST and CIFAR-10 datasets. For MNIST, there is one experiment per digit, where the training samples are the training set of this digit. The reported numbers are the AUC for classifying one-vs-rest, using the test set of this digit vs. the test sets of all other digits. For CIFAR-10, the same experiment is run with a class label, instead of the digits. Reported numbers (in all tables) are averaged over 10 runs with random initializations. Each reported value is the mean result $\pm$ the standard deviation.}
\label{tab:mnist}
\centering
\begin{tabular}{lccccc}
\toprule
Digit & KDE & AnoGAN & Deep SVDD & Our $c$ & Our $h$  \\

&\citep{kde} & \citep{anogan} & \citep{pmlr-v80-ruff18a} \\
\midrule
0 & 97.1$\pm 0.0$ & 96.6$\pm 1.3$ & 98.0$\pm 0.7$ & \bf{99.1}$\pm 0.2$ & 83.5$\pm 11.6$ \\
1   & 98.9$\pm 0.0$ &  99.2$\pm 0.6$ & \bf{99.7}$\pm 0.1$     & 97.2$\pm 0.7$ & 50.7$\pm 25.7$  \\
2   & 79.0$\pm 0.0$ & 85.0$\pm 2.9$ & 91.7$\pm 0.8$    & \bf{91.9}$\pm 0.4$       & 67.1$\pm 15.7$       \\
3   & 86.2$\pm 0.0$ & 88.7$\pm 2.1$ & 91.9$\pm 1.5$    & \bf{94.3}$\pm 0.7$       & 62.4$\pm 25.9$     \\
4   & 87.9$\pm 0.0$ & 89.4$\pm 1.3$ & \bf{94.9}$\pm 0.8$ & 94.2$\pm 0.3$      & 85.7$\pm 10.7$   \\
5   & 73.8$\pm 0.0$ &  88.3$\pm 2.9$ & \bf{88.5}$\pm 0.9$ & 87.2$\pm 2.0$       & 73.3$\pm 14.5$    \\
6   & 87.6$\pm 0.0$ &  94.7$\pm 2.7$ & 98.3$\pm 0.5$  & \bf{98.8}$\pm 0.2$        & 62.8$\pm 15.9$   \\
7   & 91.4$\pm 0.0$ &  93.5$\pm 1.8$ & \bf{94.6}$\pm 0.9$  & 93.9$\pm 0.5$     & 61.6$\pm 10.8$  \\
8   & 79.2$\pm 0.0$ &  84.9$\pm 2.1$ & 93.9$\pm 1.6$     & \bf{96.0}$\pm 0.1$     & 45.8$\pm 17.7$    \\
9   & 88.2$\pm 0.0$ &  92.4$\pm 1.1$ & 96.5$\pm 0.3$    & \bf{96.7}$\pm 0.3$      & 66.8$\pm 14.5$   \\ 
\midrule
Airplane & 61.2$\pm 0.0$  & 67.1$\pm 2.5$  & 61.7$\pm 4.2$   & \bf{74.0}$\pm 1.2$ & 48.9$\pm 13.1$     \\
Automobile & 64.0$\pm 0.0$  & 54.1$\pm 3.4$ & 65.9$\pm 2.1$      & \bf{74.7}$\pm 1.6$   & 64.6$\pm 5.4$      \\
Bird & 50.1$\pm 0.0$  & 52.9$\pm 3.0$ & 50.8$\pm 0.8$            & \bf{62.8}$\pm 2.0$         & 53.2$\pm 4.5$         \\
Cat & 56.4$\pm 0.0$  & 54.5$\pm 1.9$  & \bf{59.1}$\pm 1.4$               & 57.2$\pm 2.0$       & 51.4$\pm 6.6$    \\
Deer & 66.2$\pm 0.0$ & 65.1$\pm 3.2$  & 60.9$\pm 1.1$           & \bf{67.8}$\pm 2.2$        & 55.0$\pm 9.3$   \\
Dog & 62.4$\pm 0.0$ & 60.3$\pm 2.6$ & \bf{65.7}$\pm 2.5$           & 60.2$\pm 1.6$       & 58.9$\pm 3.7$   \\
Frog & 74.9$\pm 0.0$  & 58.5$\pm 1.4$  & 67.7$\pm 2.6$            & \bf{75.3}$\pm 3.9$         & 60.7$\pm 4.5$     \\
Horse & 62.6$\pm 0.0$  & 62.5$\pm 0.8$   & 67.3$\pm 0.9$           &\bf{68.5}$\pm 2.8$         & 58.1$\pm 3.8$       \\
Ship & 75.1$\pm 0.0$  & 75.8$\pm 4.1$  & 75.9$\pm 1.2$            & \bf{78.1}$\pm 1.2$         & 66.9$\pm 7.1$    \\
Truck & 76.0$\pm 0.0$   & 66.5$\pm 2.8$  & 73.1$\pm 1.2$             & \bf{79.5}$\pm 1.5$        & 70.3$\pm 8.3$     \\
\bottomrule
\end{tabular}
\end{table}

\begin{figure*}[t]
\centering
  \begin{tabular}{cc|cc}
  \multicolumn{2}{c}{CIFAR-10} & \multicolumn{2}{c}{MNIST} \\
\includegraphics[width=0.23\linewidth, trim={50px 150px 0 28px}, clip]{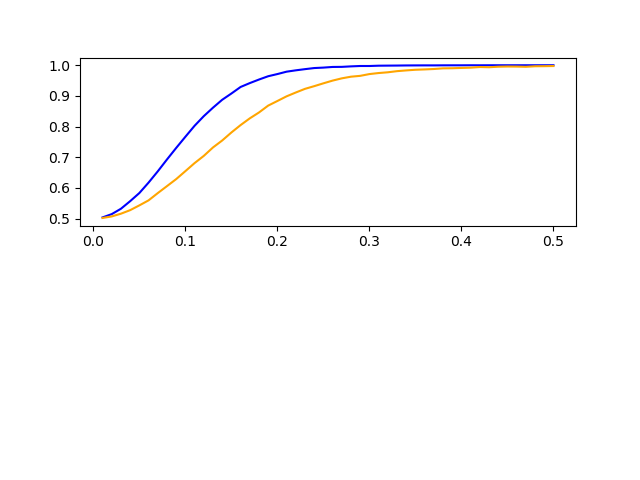}&
\includegraphics[width=0.23\linewidth, trim={50px 150px 0 28px}, clip]{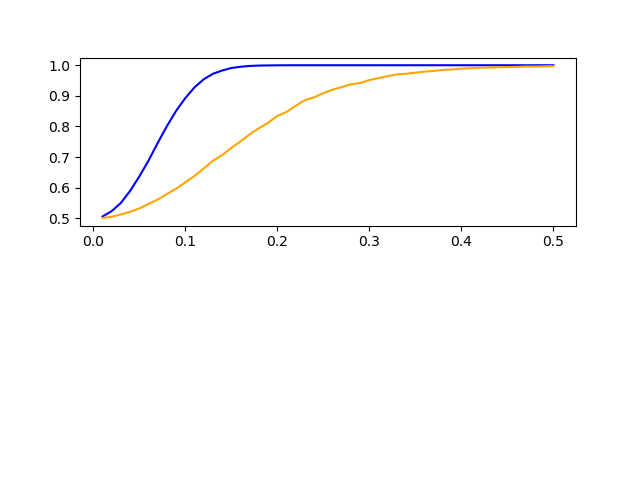}&
\includegraphics[width=0.23\linewidth, trim={50px 150px 0 28px}, clip]{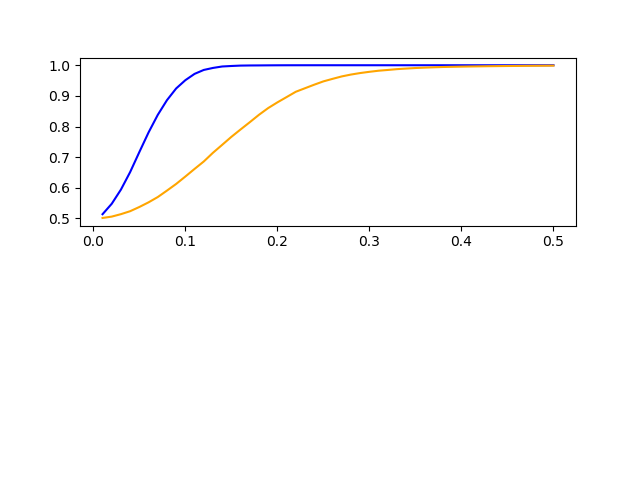}& 
\includegraphics[width=0.23\linewidth, trim={50px 150px 0 28px}, clip]{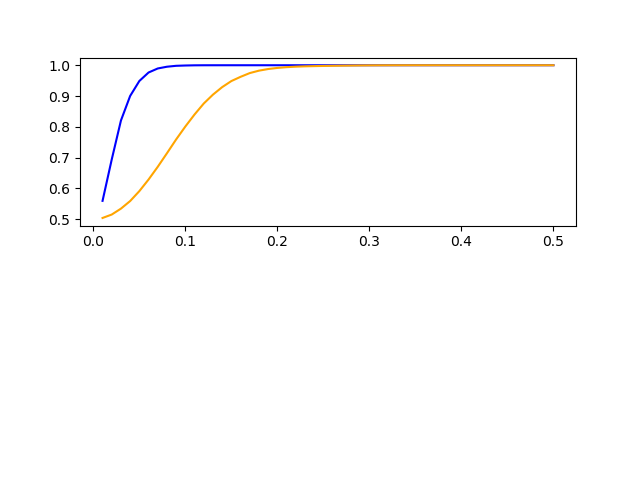}\\
(Airplane) & (Automobile) & (0) & (1)\\
\includegraphics[width=0.23\linewidth, trim={50px 150px 0 28px}, clip]{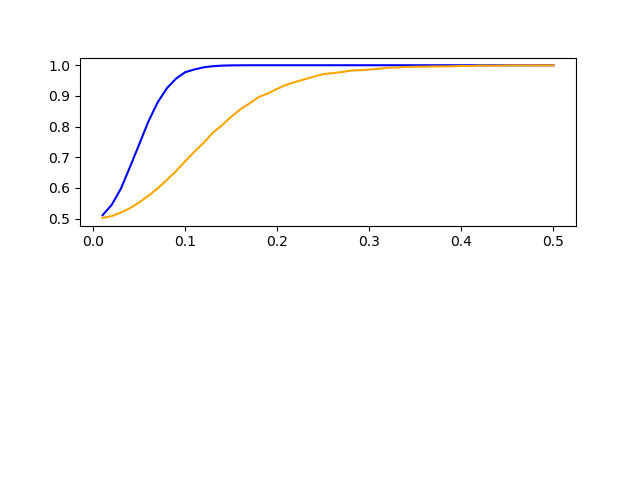}&
 \includegraphics[width=0.23\linewidth, trim={50px 150px 0 28px}, clip]{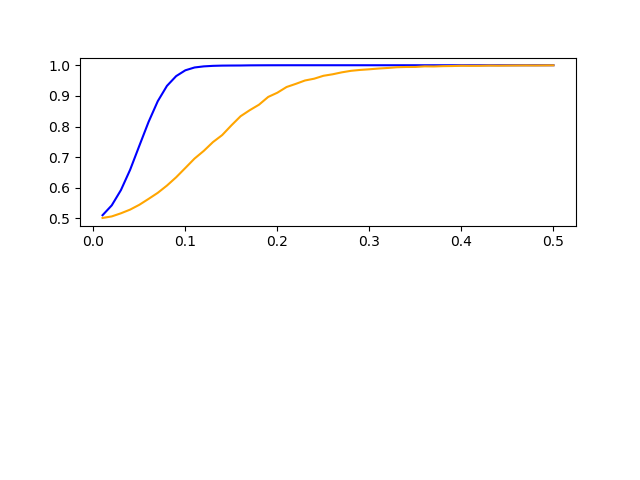}& 
 \includegraphics[width=0.23\linewidth, trim={50px 150px 0 28px}, clip]{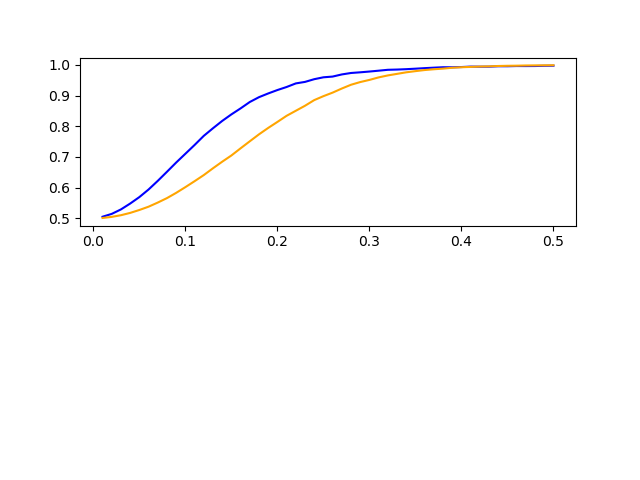}&
 \includegraphics[width=0.23\linewidth, trim={50px 150px 0 28px}, clip]{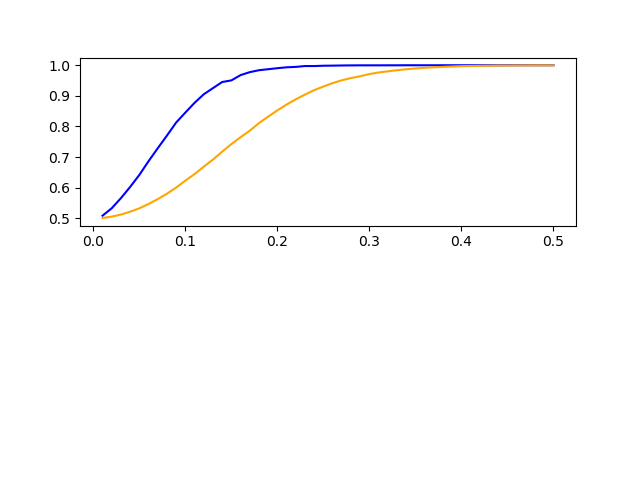}\\
 (Bird) & (Cat)  & (2)  & (3)\\
  \includegraphics[width=0.23\linewidth, trim={50px 150px 0 28px}, clip]{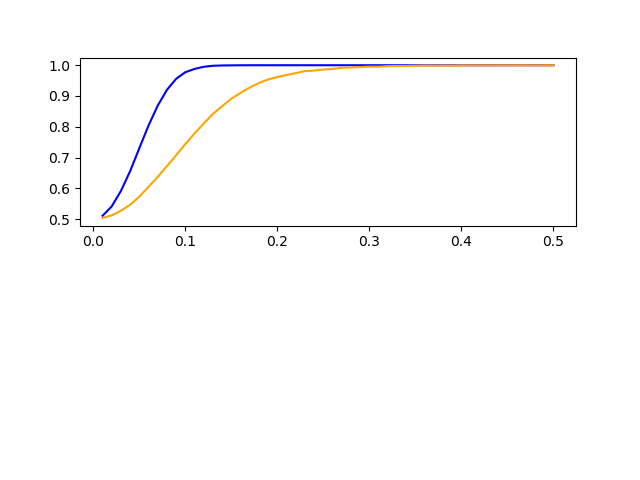} &
  \includegraphics[width=0.23\linewidth, trim={50px 150px 0 28px}, clip]  {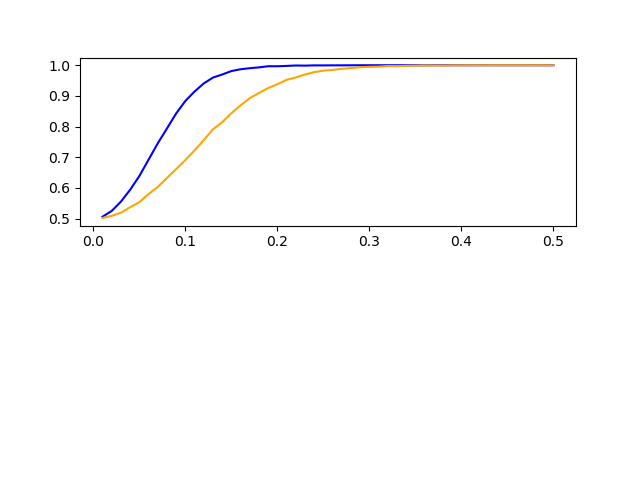} &
    \includegraphics[width=0.23\linewidth, trim={50px 150px 0 28px}, clip]{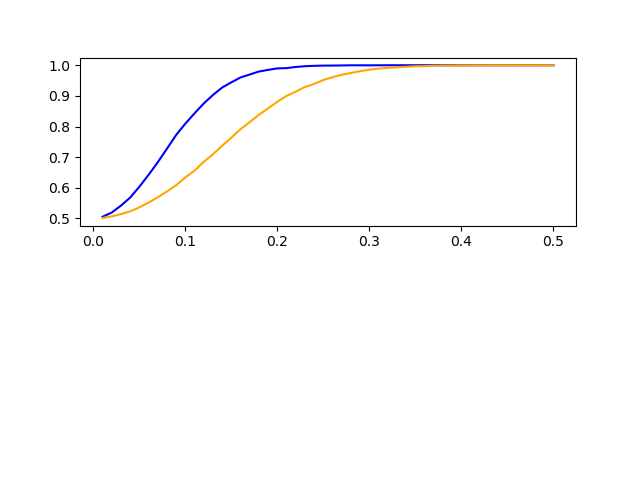} &
  \includegraphics[width=0.23\linewidth, trim={50px 150px 0 28px}, clip]  {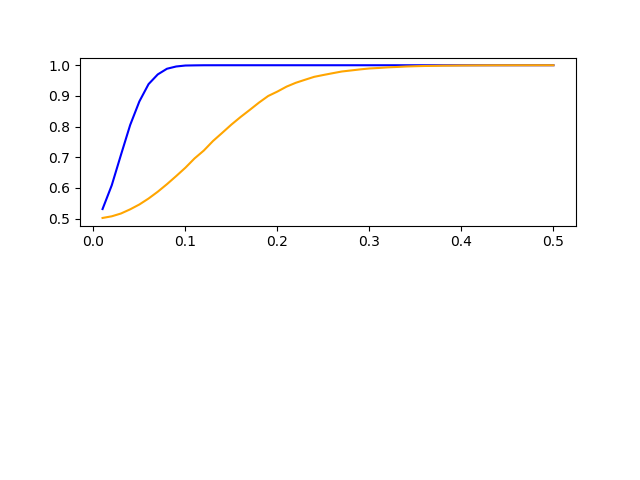} \\
  (Deer) & (Dog) & (4) & (5)\\
\includegraphics[width=0.23\linewidth, trim={50px 150px 0 28px}, clip]{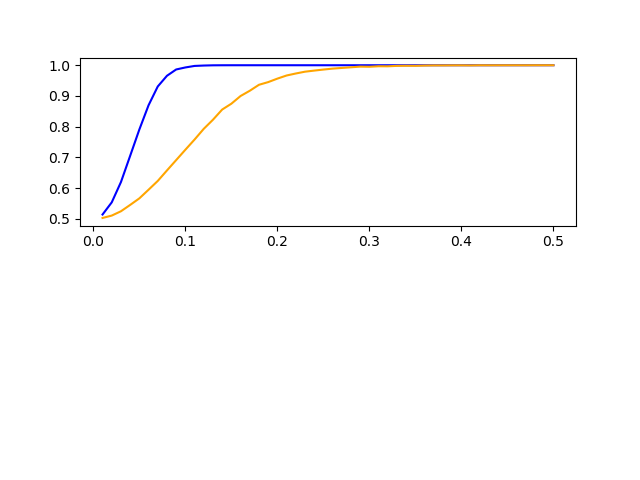} & 
\includegraphics[width=0.23\linewidth, trim={50px 150px 0 28px}, clip]{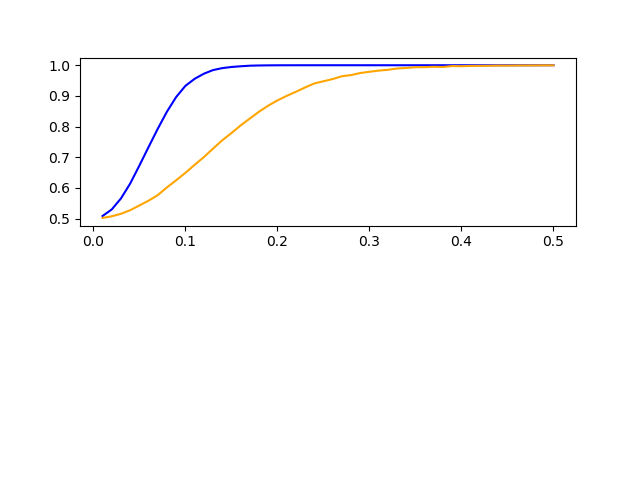} & 
\includegraphics[width=0.23\linewidth, trim={50px 150px 0 28px}, clip]{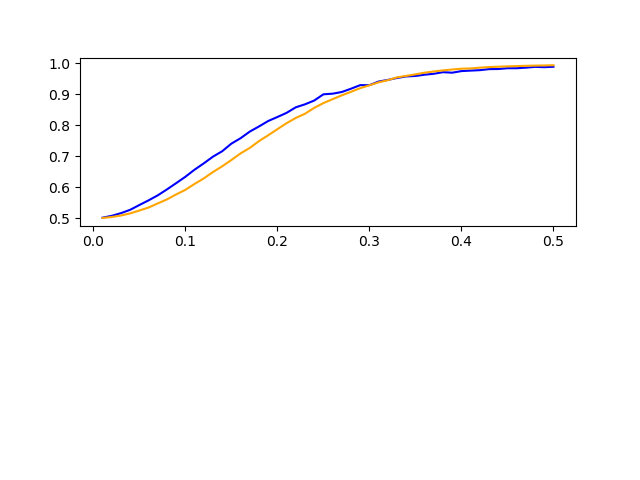} & 
\includegraphics[width=0.23\linewidth, trim={50px 150px 0 28px}, clip]{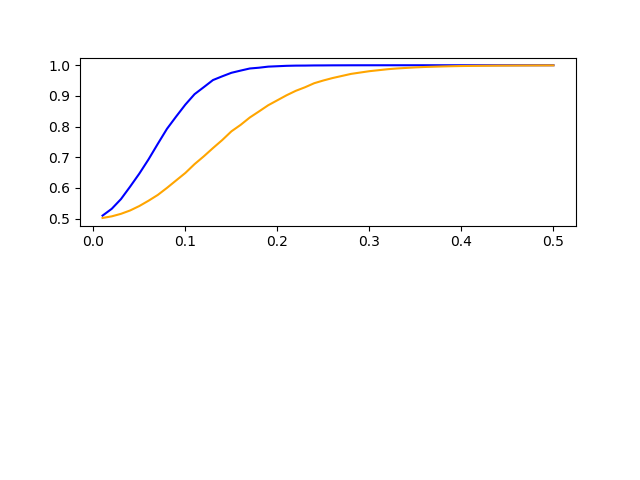} \\
(Frog) & (Horse) & (6) & (7)\\
 \includegraphics[width=0.23\linewidth, trim={50px 150px 0 28px}, clip]{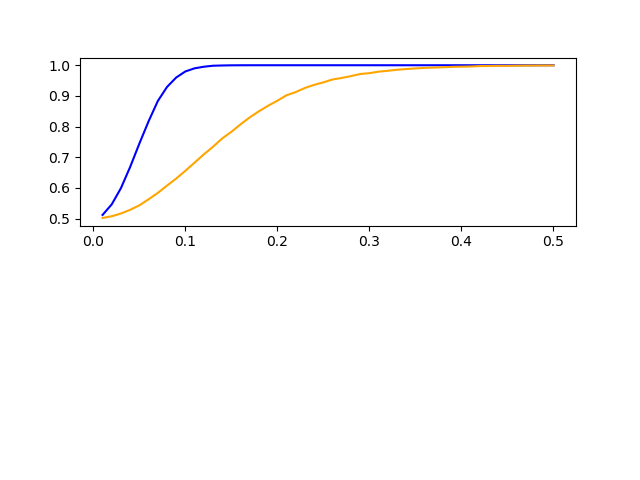}&
  \includegraphics[width=0.23\linewidth, trim={50px 150px 0 28px}, clip]{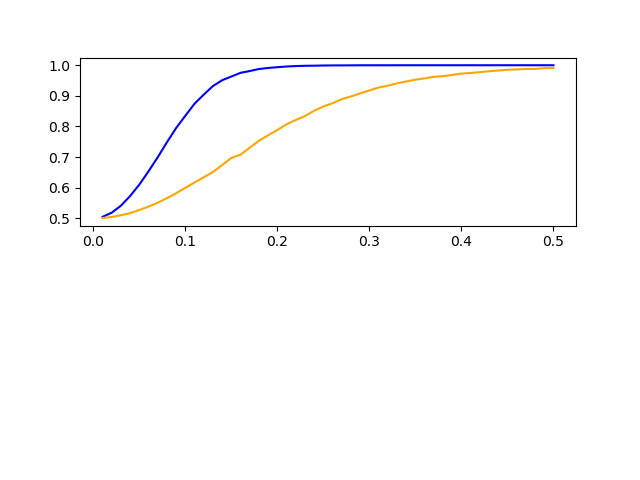} & 
   \includegraphics[width=0.23\linewidth, trim={50px 150px 0 28px}, clip]{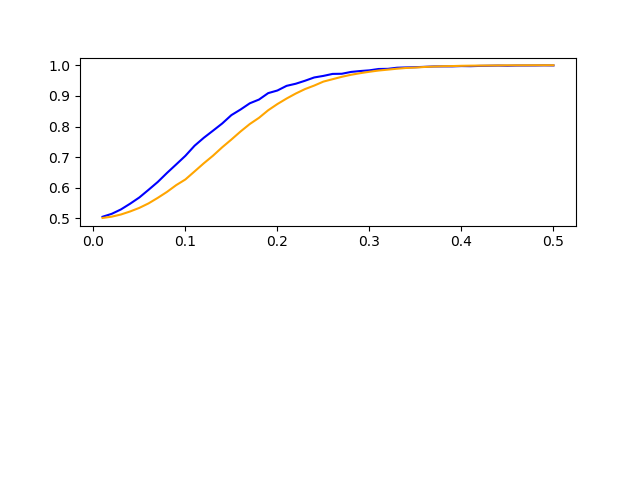}&
  \includegraphics[width=0.23\linewidth, trim={50px 150px 0 28px}, clip]{cifar_noize/hc_digit__9__only_ones.png} \\
   (Ship) & (Truck) & (8) &(9)\\
\end{tabular}
\caption{The ability to differentiate between an in-class image and an in-class image with added noise for both $c$ (yellow) and $h$ (blue). The x-axis is the amount of noise (SD of the Gaussian noise). The y-axis is the AUC. As can be seen, for both CIFAR-10 and MNIST, $h$ is much more attuned to the image quality.}
  \label{fig:noize_ones}
\end{figure*}

We employ CIFAR also to perform an ablation analysis comparing the baseline method's $c$ and $h$ with four alternatives: (i) training $c$ without training $h$, employing only $G_c$; (ii) training $h$ and $G_h$ without training $c$ nor $G_c$; (iii) training both $h$ and $c$ but using only the $G_c$ generator to obtain negative samples to both networks; and (iv) training both $h$ and $c$ but using only the $G_h$ generator for both. The results, which can be seen in Tab.~\ref{tab:ablation}, indicate that the complete method is superior to the variants, since it outperform these in the vast majority of the experiments.

\begin{table}[t]
\caption{An ablation analysis on the ten CIFAR classes (shown in order, Airplane to Truck).}
\label{tab:ablation}
\centering
\begin{tabular}{lcccccccccc}
\toprule
 & 1 & 2 & 3 & 4 & 5 & 6 & 7 & 8 & 9 & 10\\
\midrule
Baseline $c$       & {\bf 74.0} & {\bf74.7} & {\bf 62.8} & 57.2 & \bf{67.8} & 60.2 & \bf{75.3} & 68.5 & {\bf 78.1} & {\bf 79.5} \\
Baseline $h$       & 48.9 & 64.6 & 53.2 & 51.4 & 55.0 & 58.9 & 60.7 & 58.1 & 66.9 & 70.3 \\
$c$ only           & 73.0 & 63.8 & 59.1 & \bf{59.6} & 60.4 & 60.7 & 62.8 & 62.1 & 77.2 & 73.3 \\
$h$ only           & 35.6 & 51.9 & 50.1 & 48.0 & 48.3 & 48.0 & 68.0 & 54.7 & 75.6 & 73.1 \\
$c$ with $G_c$ only & 73.4 & 74.3 & 61.2 & 58.8 & 66.4 & 59.0 & 72.7 & 70.3 & 77.1 & 75.1 \\
$h$ with $G_c$ only & 63.7 & 68.3 & 59.2 & 56.6 & 58.8 & 57.4 & 60.7 & 65.5 & 71.3 & 74.2 \\
$c$ with $G_h$ only & 73.2 & 71.2 & 59.6 & 51.7 & 65.4 & \bf{60.9} & 68.3 & \bf{68.9} & 76.7 & 77.2 \\
$h$ with $G_h$ only & 56.0 & 65.3 & 55.5 & 53.2 & 50.6 & 58.6 & 54.8 & 58.4 & 65.2 & 71.8\\
\bottomrule
\end{tabular}
\end{table}

Next, we evaluate our method on data from the German Traffic Sign Recognition (GTSRB) Benchmark of~\cite{Houben-IJCNN-2013}. The dataset contains 43 classes, from which one class (stop signs, class \#15) was used by~\cite{pmlr-v80-ruff18a} to demonstrate one-class classification where the negative class is the class of adversarial samples (presumably based on a classifier trained on all 43 classes). We were not able to obtain these samples by the time of the submission.  Instead, We employ the sign data in order to evaluate three other one-class tasks: (i) the conventional task, in which a class is compared to images out of all other 42 classes; (ii) class image vs. noise image, as above, using Gaussian noise with a fixed noise level of $\sigma=0.2$; (iii) same as (ii) only that after training on one class, we evaluate on images from all classes. 

The results are presented, for the first 20 classes of GTSRB, in Tab.~\ref{tab:GTSRB}. The reported results are an average over 10 random runs. On the conventional one-class task (i), both our $c$ and $h$ neural networks outperform the baseline Deep-SVDD method, with $c$ performing better than $h$, as in the MNIST and CIFAR experiments.  Also following the same pattern as before, the results indicate that $h$ captures image noise better than both $c$ and Deep-SVDD, for both the test images of the training class and the test images from all 43 classes.

\begin{table}[t]
\caption{Results obtained on the GTSRB dataset on three one-class tasks. Reported are AUC values in percents. DS denotes Deep-SVDD by~\cite{pmlr-v80-ruff18a}.}
\label{tab:GTSRB}
\centering
\begin{tabular}{lccccccccc}
\toprule
Class  & \multicolumn{3}{c}{(i) Multiclass} & \multicolumn{3}{c}{(ii) Noise in-class} & \multicolumn{3}{c}{(iii) Noise all images. }\\
  \cmidrule(l{5pt}r{5pt}){2-4}
  \cmidrule(l{5pt}r{5pt}){5-7}
  \cmidrule(l{5pt}r{5pt}){8-10}
  
 & $c$ & $h$ & DS & $c$ & $h$ & DS & $c$ & $h$ & DS \\
\midrule
1  & 92.6 & 77.8 & 86.2  & 61.1 & 62.3 & 61.8 & 55.1 & 58.9 & 44.7 \\
2  & 78.0 & 75.4 & 71.9  & 75.6 & 96.3 & 74.7 & 71.4 & 92.3 & 51.4 \\
3  & 78.3 & 79.5 & 65.8  & 71.0 & 95.0 & 66.1 & 79.0 & 98.5 & 50.0 \\
4  & 79.7 & 81.7 & 63.9  & 89.1 & 97.0 & 66.3 & 71.0 & 82.0 & 53.2 \\
5  & 79.7 & 79.3 & 73.2  & 90.1 & 95.6 & 48.7 & 72.3 & 84.5 & 56.3 \\
6  & 73.8 & 66.4 & 81.8  & 91.1 & 85.3 & 88.1 & 75.3 & 75.2 & 62.0 \\
7  & 91.0 & 90.2 & 73.6  & 93.0 & 94.1 & 84.1 & 58.1 & 72.4 & 55.2 \\
8  & 82.1 & 75.4 & 74.6  & 93.7 & 93.9 & 51.6 & 71.0 & 82.1 & 56.7 \\
9  & 80.2 & 84.7 & 73.4  & 92.4 & 93.7 & 54.3 & 70.5 & 81.0 & 53.8 \\
10 & 85.8 & 74.9 & 79.2  & 82.0 & 93.4 & 88.7 & 71.0 & 84.0 & 57.7 \\
11 & 81.9 & 81.7 & 82.7  & 93.4 & 93.9 & 65.0 & 78.2 & 78.4 & 68.3 \\
12 & 86.9 & 84.6 & 54.3  & 78.3 & 92.6 & 89.8 & 70.3 & 89.1 & 64.5 \\
13 & 88.1 & 82.1 & 60.0  & 84.0 & 91.2 & 74.6 & 78.2 & 79.1 & 60.5 \\
14 & 93.5 & 93.7 & 57.6  & 82.3 & 85.4 & 78.9 & 76.0 & 77.4 & 63.4 \\
15 & 98.2 & 93.7 & 71.9  & 67.3 & 81.2 & 65.0 & 54.0 & 64.0 & 49.2 \\
16 & 87.6 & 90.5 & 71.8  & 59.0 & 78.3 & 90.0 & 55.3 & 63.2 & 55.6 \\
17 & 92.5 & 96.8 & 76.7  & 73.1 & 83.4 & 83.1 & 58.3 & 67.2 & 55.6 \\
18 & 99.3 & 85.4 & 64.4  & 73.0 & 92.1 & 77.7 & 87.3 & 97.2 & 50.7 \\
19 & 79.5 & 79.7 & 52.2  & 68.1 & 81.2 & 90.4 & 62.0 & 78.3 & 57.8 \\
20 & 92.9 & 92.9 & 52.1  & 76.3 & 78.2 & 81.6 & 52.3 & 63.0 & 74.0 \\
\midrule
Avg & {\bf 86.1} & 83.3 & 69.4 & 79.7 & {\bf 88.2} & 74.0 & 68.3 & {\bf 78.4} & 57.0\\
\bottomrule
\end{tabular}
\end{table}

In order to explore the possibility of using the method out of the context of one-class experiments and for scientific data analysis, we downloaded samples from the Cancer Genome Atlas (\url{https://cancergenome.nih.gov/}). The data contains mRNA expression levels for over 22,000 genes, measured from the blood of 9,492 cancer patients. For most of the patients, there is also survival data in days. We split the data to 90\% train and 10\% test.

We run our method on the entire train data and try to measure whether the functions recovered are correlated with the survival data on the test data. While, as mentioned in Sec.~\ref{sec:intro}, the gene expression optimizes a fitness function, and one can claim that gene expressions that are less fit, indicate an expected shortening in longevity, this argument is speculative. Nevertheless, since survival is the only regression signal we have, we focus on this experiment.

We compare five methods: (i) the $h$ we recover, (ii) the $c$ we recover, (iii) the $h$ we recover, when learning only $h$ and not $c$, (iv) the $c$ we recover, when learning only $c$ and not $h$, (v) the first PCA of the expression data, (vi) the classifier of DeepSVDD. The latter is used as baseline due to the shared form of input with our method. However, we do not perform an anomaly detection experiment.

In the simplest experiment, we treat $h$ as a unary function by replicating the single input, as done above. We call this the standard correlation experiment. However, $h$ was trained in order to compare two local points and we, therefore, design the local correlation protocol. First, we identify for each test datapoint, the closest test point. We then measure the difference in the target value (the patient's survival) between the two datapoints, the difference in value for unary functions (e.g., for $c$ or for the first PCA), or $h$ computed for the two datapoints. This way vectors of the length of the number of test data points are obtained. We use the Pearson correlation between these vectors and the associated p-values as the test statistic.

The results are reported in Tab.~\ref{tab:cancercorr}. As can be seen, the standard correlation is low for all methods. However, for local correlation, which is what $h$ is trained to recover, the $h$ obtained when learning both $h$ and $c$ is considerably more correlated than the other options, obtaining a significant p-value of 0.021. Interestingly, the ability to carve out parts of the space with $c$, when learning $h$ seems significant and learning $h$ without $c$ results in a much reduced correlation.

\begin{table}[t]
\caption{Correlation between the recovered functions and the patient's survival.}
\label{tab:cancercorr}
\centering
\begin{tabular}{lcccc}
\toprule
  & \multicolumn{2}{c}{Local} & \multicolumn{2}{c}{Standard}\\
  \cmidrule(l{5pt}r{5pt}){2-3}
  \cmidrule(l{5pt}r{5pt}){4-5}
Method	& Pearson correlation	& P-value  & Pearson correlation	& P-value\\
\midrule
Our $h$ &	0.076 &{\bf 0.021} &0.041 & 0.384\\
 Our $c$&	0.020 &0.520 & 	0.029 &  0.444 \\
Our $h$ trained without $c$	&0.033 &0.405 &  0.017 &0.716\\
Our $c$ trained without $h$	&0.029 & 0.444 & 0.031 & 0.420 \\
First PCA of mRNA expression	&0.047&0.308 & 0.006 & 0.903 \\
Deep-SVDD&		0.021& 0.510	& 	0.032 & 0.410		\\
\bottomrule
\end{tabular}
\end{table}

\begin{figure*}[t]
\centering
  \begin{tabular}{cc}
\includegraphics[width=0.359\linewidth, clip]{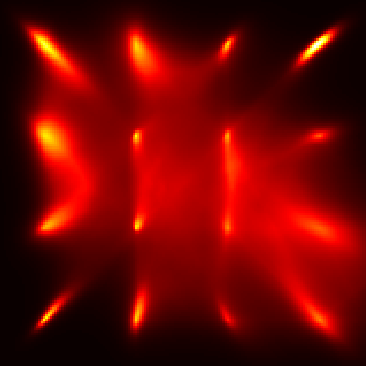}&
\includegraphics[width=0.38\linewidth,height=0.360\linewidth,  clip]{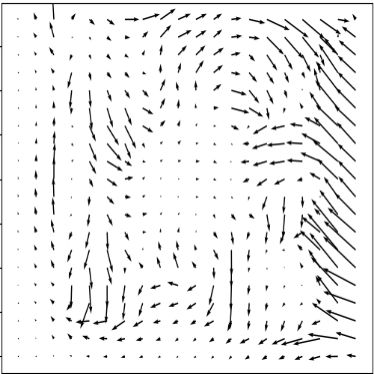}\\
(a) & (b)\\
\includegraphics[width=0.360\linewidth,height=0.356\linewidth, clip]{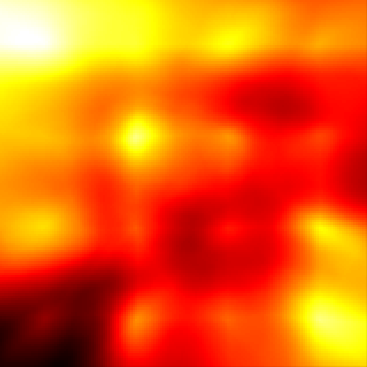} &
\includegraphics[width=0.38\linewidth, ,height=0.356\linewidth, clip]{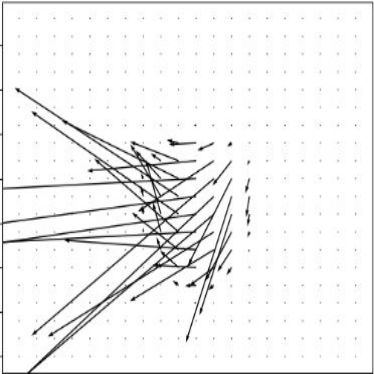} \\
(c) &  (d) \\
\end{tabular}
\caption{Results for a mixture of Gaussians placed on a 2D grid, following~\cite{multiplayer}. (a) The values of the function $c$ across the 2D domain, when $c$ and $h$ are jointly trained. (b) The comparator $h$ shown as a quiver plot of the direction of maximal increase in value.  (c) The values of $c$ when it is trained alone without $h$. (d) A quiver plot for $h$, when it is trained without $c$.}
  \label{fig:grid}
\end{figure*}

Finally, we test our method on the Gaussian Mixture Model data following~\cite{multiplayer}, who perform a similar experiment in order to study the phenomenon of mode hopping. In this experiment, the data is sampled from 16 Gaussians placed on a 4x4 grid with coordinates $-1.5$, $0.5$, $0.5$ and $1.5$ in each axis. In our case, since we model local maxima, we take each Gaussian to have a standard deviation that is ten times smaller than that of~\cite{multiplayer}: 0.01 instead of 0.1. We treat the mixture as a single class and sample a training set from it, to which we apply our methods as well as the variants where each network trains separately.

The results are depicted in Fig.~\ref{fig:grid}, where we present both $c$ and $h$. As can be seen, our complete method captures with $c$ the entire distribution, while training $c$ without $h$ runs leads to an unstable selection of a subset of the modes. Similarly, training $h$ without $c$ leads to an $h$ function that is much less informative than the one extracted when the two networks are trained together.

\section{Discussion}

The current machine learning literature focuses on models that are smooth almost everywhere. The label of a sample is implicitly assumed as likely to be the same as those of the nearby samples. In contrast to this curve-based world view, we focus on the cusps. This novel world view could be beneficial also in supervised learning, e.g., in the modeling of sparse events.

Our model recovers two functions: $c$ and $h$, which are different in form. This difference may be further utilized to allow them to play different roles post learning. Consider, e.g., the problem of drug design, in which one is given a library of drugs. The constraint function $c$ can be used, post training, to filter a large collection of molecules, eliminating toxic or unstable ones. The value function $h$ can be used as a local optimization score in order to search locally for a better molecule.

\subsection*{Acknowledgements}
This project has received funding from the European Research Council (ERC) under the European Union’s Horizon 2020 research and innovation programme (grant ERC CoG 725974). The contribution of Sagie Benaim is part of Ph.D. thesis research conducted at Tel Aviv University.

\bibliographystyle{iclr2019_conference}
\bibliography{localsamples}

\appendix

\section{Analysis}
\label{sec:analysis}
We show an example in which modeling using local-maxima-points is an efficient way to model, in comparison to the conventional classification-based approach. We then extend the framework of spectral-norm bounds, which were derived in the context of classification, to the case of unsupervised learning using local maxima.

\subsection{Modeling using $\argmax v$ is beneficial}

While modeling with a classifier $c$ is commonplace, modeling a set $\sS = \{x_i\}^{m}_{i=1}$ as the local maxima of a function is much less conventional. Next, we will argue that at least in some situations, it may be advantageous. 

We compare the complexity of a ReLU neural network $\mW_2 \phi(\mW_1 x + \vb)$ modeling a set $\sS = \{x_i\}^{m}_{i=1}$ of $m$ real numbers as either a classifier or as maxima of a value function. Here, $\mW_1$ and $\mW_2$ are linear transformations, $\vb$ is a vector and $\phi(u_1,\dots,u_n) := (\max(0,u_1),\dots,\max(0,u_n))$ is the ReLU activation function, for $u_1,\dots,u_n \in \mathbb{R}$ and $n \in \mathbb{N}$. We denote by $c_{\sS}:\mathbb{R} \to \{\pm 1\}$ the function that satisfies, $c_{\sS}(x) = 1$ if and only if $x \in \sS$.

For this purpose, we take any distribution $D$ over $[x_1-1,x_m+1]$ that has positive probability for sampling from $\sS$. Formally, $D$ is a mixture distribution that samples at probability $q > 0$ from $D_0$ and probability $1-q$ from $D_1$, where  $D_0$ is a distribution supported by $\sS$ and $D_1$ is a distribution supported by the segment $[x_1-1,x_m+1]$. The task is to achieves error $\leq \epsilon$ in approximating $c_{\sS}$. The error of a function $c:\mathbb{R} \to \mathbb{R}$ is measured by $\one_{x\sim D}[c(x) \neq c_{\sS}(x)]$, which is the probability of $c$ incorrectly labeling a random number $x \sim D$.

We show that there is a ReLU neural network $v(x) = \mW_2 \phi(\mW_1 x + \vb)$ with $2m$ neurons, such that, the set of local maxima of $v$ is $\sS$. In particular, we have: $\mathbb{E}_{x\sim D}\one[c_v(x) \neq c_{\sS}(x)] = 0$. Here, $c_v:\mathbb{R} \to \{\pm 1\}$, that satisfies, $c_v(x) = 1$ if and only if $x$ is a local maxima of $v$. Additionally, we show that any such $v$ has at least $2m-1$ hidden neurons. On the other hand, we show that for any distribution $D$ (with the specifications above), any classification neural network $c(x) = \mW_2 \phi(\mW_1 x + \vb)$ that has error $\mathbb{E}_{x\sim D}\one[c(x) \neq c_{\sS}(x)] \leq \epsilon$ has at least $3m$ hidden neurons.

\begin{restatable}{theorem}{complexityFirst}\label{thm:complexity1} Let $\sS = \{x_i\}^{m}_{i=1} \subset \sR$ be any set of points such that $x_i < x_{i+1}$ for all $i \in \{1,\dots,m-1\}$. We define $c_{\sS}:\sR \to \{\pm 1\}$ to be the function, such that $c_{\sS}(x) = 1$ if and only if $x \in \sS$. Then,
\begin{enumerate}
\item There is a ReLU neural network $v :\sR \to \sR$ of the form $v(x) = \mW_2 \phi(\mW_1 x+\vb)$ with $2m$ hidden neurons such that the set of local maximum points of $v$ is $\sS$.
\item Any ReLU neural network $v :\sR \to \sR$ of the form $v(x) = \mW_2 \phi(\mW_1 x+\vb)$ such that any $x \in \sS$ is a local maxima of $v$ has at least $2m-1$ neurons.
\item Let $D = q \cdot D_0 \cup (1-q) \cdot D_1$ be a distribution that samples at probability $q > 0$ from $D_0$ and probability $1-q$ from $D_1$, where  $D_0$ is a distribution supported by $\sS$ and $D_1$ is a distribution supported by the segment $[x_1-1,x_m+1]$. Then, for a small enough $\epsilon >0$, every ReLU neural network $c : \sR \to \sR$ of the form $c(x) = \mW_2 \phi(\mW_1 x + \vb)$, such that $\mathbb{E}_{x \sim D}\one[c(x)\neq c_{\sS}(x)] \leq \epsilon$ has at least $3m$ hidden neurons.
\end{enumerate}
\end{restatable}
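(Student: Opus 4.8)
The plan is to prove all three items through one basic fact: a one-hidden-layer ReLU network $x\mapsto\mW_2\phi(\mW_1 x+\vb):\sR\to\sR$ is a continuous piecewise-linear function whose number of corners is at most its number of hidden neurons, while a single neuron $a\,\max(0,x-p)$ (or $a\,\max(0,p-x)$) produces one corner at $p$ with a prescribed slope jump. Each claim then reduces to counting corners. For claim~1 I would take $v$ to be a sawtooth: pick valleys $t_i\in(x_i,x_{i+1})$ and let $v$ have slope $+1$ on $(-\infty,x_1)$, alternating slopes $-1,+1,-1,\dots$ on the successive intervals cut out by $x_1<t_1<x_2<\dots<t_{m-1}<x_m$, and slope $-1$ on $(x_m,\infty)$; then $v\to-\infty$ at both ends, $v$ has no flat piece, a strict local maximum at each $x_i$ and a strict local minimum at each $t_i$, so its local maxima are exactly $\sS$. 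Realizing this needs one neuron per corner ($2m-1$ corners) plus one extra neuron $a\,\max(0,x_1-x)$ with $a<0$ to make the leftmost slope positive --- a sum of neurons $a_j\max(0,x-p_j)$ alone has slope $0$ for $x$ very negative, creating a whole ray of local maxima --- giving $2m$ in total. For claim~2, if each $x_i$ is a strict local maximum of $v$ then $x_i$ is a corner (an affine function has no strict local maximum), and since at a strict local maximum the left slope is positive and the right slope negative, the piecewise-constant slope of $v$ changes sign inside each gap $(x_i,x_{i+1})$, forcing a further corner strictly inside it; these $m+(m-1)$ corners are distinct, so $v$ has at least $2m-1$ neurons.

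Claim~3 is the real work; the key point is that $c$ must \emph{literally} equal $c_{\sS}\in\{\pm1\}$ on a set of $D$-measure $\ge1-\epsilon$, which forces genuine flat pieces at level $-1$. Reading the hypotheses in the natural way ($D_0$ is supported on all of $\sS$, and $D_1$ is non-atomic with full support on $[x_1-1,x_m+1]$), I would establish, for $\epsilon$ small enough: (i) $c(x_i)=1$ for every $i$, since otherwise the error is at least $q\cdot D_0(\{x_i\})>0$; and (ii) the union of the linear pieces of $c$ on which $c\not\equiv-1$ lies, up to finitely many points, in the error set and so has $D_1$-mass $\le\epsilon/(1-q)$, whence, by full support of $D_1$, each of the $m+1$ gaps $(x_1-1,x_1),(x_i,x_{i+1}),(x_m,x_m+1)$ contains a maximal piece on which $c\equiv-1$ (a ``good plateau'').

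Given (i)--(ii) I would count corners as follows. For each $i$, let $\beta_i$ be the right endpoint of the good plateau closest to $x_i$ on its left, and $\gamma_i$ the left endpoint of the good plateau closest to $x_i$ on its right, so $\beta_i<x_i<\gamma_i$ and $c(\beta_i)=c(\gamma_i)=-1$. Since $c\equiv-1$ on a flat piece ending exactly at $\beta_i$ (and starting exactly at $\gamma_i$), the slope of $c$ jumps off $0$ there, so $\beta_i$ and $\gamma_i$ are corners; and since $\max_{[\beta_i,\gamma_i]} c\ge c(x_i)=1>-1=c(\beta_i)$, the maximum over $[\beta_i,\gamma_i]$ is attained at an interior point, hence at a corner $\delta_i\in(\beta_i,\gamma_i)$. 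So each $i$ yields three distinct corners $\beta_i<\delta_i<\gamma_i$; and because a nondegenerate good plateau separates consecutive gaps, $\gamma_i<\beta_{i+1}$, so the triples are pairwise disjoint. Thus $c$ has at least $3m$ corners, hence at least $3m$ hidden neurons.

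The step I expect to fight with is the quantitative core of claim~3: making ``$\epsilon$ small enough'' uniform over all networks $c$, and converting ``small population error'' into the \emph{exact} statement that $c$ has nondegenerate flat pieces at level $-1$ inside every gap, together with the bookkeeping that the $3m$ corners are genuinely distinct (ruling out zero-length plateaus, corners shared by adjacent gaps, and the interaction of $D_1$ with the finite set $\sS$). This is precisely where the regularity assumptions on $D_0$ and $D_1$ enter; claims~1 and~2, by contrast, are essentially immediate once the piecewise-linear picture is set up.
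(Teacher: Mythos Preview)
Your proof is correct and follows essentially the same route as the paper: for (1) and (2) both build a sawtooth and count pieces between consecutive $x_i$, invoking the piece-to-neuron correspondence of Arora et al.; for (3) both take $\epsilon$ below $q\min_i D_0(\{x_i\})$ and $(1-q)\min_i D_1(\text{gap}_i)$, force $c(x_i)=1$, and then argue each gap contributes enough pieces.

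The one substantive difference is in the last step of (3). The paper shows that the level set $\{x\in(x_i,x_{i+1}):c(x)=-1\}$ is infinite, picks two points $a_i<b_i$ from it, and asserts three pieces in $[x_i,x_{i+1}]$ because $(x_i,1),(a_i,-1),(b_i,-1),(x_{i+1},1)$ are not collinear; taken literally this is not enough, since a two-piece V can pass through all four points. Your argument is tighter here: from the infinite level set you explicitly extract a nondegenerate flat piece $c\equiv-1$ (your ``good plateau'') --- which is indeed what an infinite level set of a PWL function forces --- and then count three distinct corners $\beta_i<\delta_i<\gamma_i$ around each $x_i$, with the disjointness $\gamma_i<\beta_{i+1}$ coming automatically from the plateau separating consecutive triples. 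This plateau step is exactly the justification the paper's phrasing leaves implicit, and your organization around the $x_i$ (three corners each) also absorbs the paper's separate treatment of the two boundary gaps.
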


\begin{proof} We begin by proving (1). We construct $v$ as follows:
\begin{itemize}
\item $\forall x \in (-\infty,x_{1}]$: $v(x) = x-x_1+1$.
\item $\forall i \in \{1,\dots,m-1\}: \forall x \in [x_i,\frac{x_{i}+x_{i+1}}{2}]$: $v(x) = \frac{-2}{x_{i+1}-x_i} (x-x_i) + 1$.
\item $\forall i \in \{1,\dots,m-1\}: \forall x \in [\frac{x_{i}+x_{i+1}}{2},x_{i+1}]$: $v(x) = \frac{1}{x_{i+1}-x_i}(x-x_{i+1}) + 1$.
\item $\forall x \in [x_{m},\infty)$: $v(x) = x_m - x + 1$.
\end{itemize}
we consider that $v$ is a piece-wise linear function with $2m$ linear pieces and $\argmax v = \sS$. By Thm.~2.2 in~\cite{arora2018understanding}, this function can be represented as a ReLU neural network of the form $v(x) = \mW_2 \phi(\mW_1 x + \vb)$, that has $2m$ hidden neurons.

Next, we prove (2). Let $v:\mathbb{R} \to \mathbb{R}$ be a function of the form $v(x) = \mW_2\phi(\mW_1x+\vb)$, such that each $x \in \sS$ is a local maxima of it. First, by Thm.~2.1 in~\cite{arora2018understanding}, $v$ is a piece-wise linear function. We claim that $v$ has at least two linear pieces between each consecutive points $x_i$ and $x_{i+1}$, for $i \in \{1,\dots,m-1\}$. Assume the contrary, i.e., there is an index $i \in \{1,\dots,m-1\}$, such that, $v$ has only one piece between $x_i$ and $x_{i+1}$. If $v(x_i) = v(x_{i+1})$, then, $v$ is constant between $x_i$ and $x_{i+1}$, and therefore, $x_i$ and $x_{i+1}$ are not local maximas of $v$, in contradiction. If $v(x_i) < v(x_{i+1})$, then, because $v$ is linear between $x_i$ and $x_{i+1}$, for every point $x \in (x_i,x_{i+1})$, we have $v(x_i) < v(x)$, in contradiction to the assumption that $x_i$ is a local maxima of $v$. If $v(x_i) > v(x_{i+1})$, then, because $v$ is linear between $x_i$ and $x_{i+1}$, for every point $x \in (x_i,x_{i+1})$, we have $v(x_{i+1}) < v(x)$, in contradiction to the assumption that $x_{i+1}$ is a local maxima of $v$. We conclude that $v$ has at least two linear pieces between the points $x_i$ and $x_{i+1}$, for all $i \in \{1,\dots,m-1\}$. Therefore, $v$ has at least $2m$ pieces. By Thm.~2.2 in~\cite{arora2018understanding}, $v$ has at least $2m-1$ hidden neurons.

Next, we prove (3). We denote $x_0=x_1-1$ and $x_{m+1} = x_m+1$. Let $\mathbb{P}_{D_0}[x_i]$ be the probability of sampling $x_i$ from $D_0$. Since $D_0$ is supported by $\sS$, we have: $q \cdot \mathbb{P}_{D_0}[x_i] > 0$. We define $\alpha := q \min_{i \in \{1,\dots,m\}} \mathbb{P}_{D_0}[x_i]$. In addition, $D_1$ is a continuous distribution supported by the closed segment $[x_1-1,x_m+1]$. Thus, by Weierstrass' extreme value theorem, the probability density function $\mathbb{P}_{D_1}[x]$ of $D_1$ that is a continuous function, obtains its extreme values within the segment. In addition, since $D_1$ is supported by $[x_1-1,x_m+1]$, we have: $\mathbb{P}_{D_1}[x] > 0$ for all $x \in [x_1-1,x_m+1]$. By combining the above two statements, we conclude that there is a point $x^* \in [x_1-1,x_m+1]$ such that $\mathbb{P}_{D_1}[x] \geq \mathbb{P}_{D_1}[x^*] > 0$ for every $x \in [x_1-1,x_m+1]$. We denote by $\beta := (1-q) \min_{i \in \{0,\dots,m\}} \mathbb{P}_{D_1}[x \in (x_i,x_{i+1})] > 0$. Since we are interested in proving the claim for a small enough $\epsilon>0$, we can simply assume that  $\epsilon < \min(\alpha,\beta)$ and $c : \sR \to \sR$ a ReLU neural network of the form $c(x) = \mW_2 \phi(\mW_1 x+\vb)$.

We have:
\begin{equation}
\begin{aligned}
\mathbb{E}_{x \sim D}\one[c(x)\neq c_{\sS}(x)] & = q\mathbb{E}_{x \sim D_0}\one[c(x)\neq c_{\sS}(x)] + (1-q)\mathbb{E}_{x \sim D_1}\one[c(x)\neq c_{\sS}(x)] \\
& \geq q\mathbb{E}_{x \sim D_0}\one[c(x)\neq c_{\sS}(x)] \geq \alpha \sum^{m}_{i=1}\one[c(x)\neq c_{\sS}(x)] \\
\end{aligned}
\end{equation}
Assume by contradiction that: $c(x_i)\neq c_{\sS}(x_i)$.  Then, $\mathbb{E}_{x \sim D}\one[c(x)\neq c_{\sS}(x)] \geq \alpha > \epsilon$ in contradiction. Therefore, $c(x_i) = c_{\sS}(x_i) = 1$ for every $x_i \in \sS$.
 
We also have:
\begin{equation}
\begin{aligned}
\mathbb{E}_{x \sim D}\one[c(x)\neq c_{\sS}(x)] & = q\mathbb{E}_{x \sim D_0}\one[c(x)\neq c_{\sS}(x)] + (1-q)\mathbb{E}_{x \sim D_1}\one[c(x)\neq c_{\sS}(x)] \\
& \geq (1-q)\mathbb{E}_{x \sim D_1}\one[c(x)\neq c_{\sS}(x)] \\
\end{aligned}
\end{equation}
Assume by contradiction that there is $i \in \{1,\dots,m-1\}$, such that the set $E_i = \{x \in (x_i,x_{i+1}) \vert c(x) = 0\}$ is finite. Then,
\begin{equation}
\begin{aligned}
\mathbb{E}_{x \sim D}\one[c(x)\neq c_{\sS}(x)] &\geq (1-q)\mathbb{E}_{x \sim D_1}\one[c(x)\neq c_{\sS}(x)] \\
&\geq (1-q) \mathbb{P}_{D_1}[x \in (x_i,x_{i+1})] \geq \beta > \epsilon
\end{aligned}
\end{equation} 
in contradiction. Let $i \in \{1,\dots,m-1\}$, $a_i$ and $b_i$ be two points such that $x_i < a_i < b_i < x_{i+1}$ and $c(a_i) = c(b_i) = 0$. Since $c$ is a continuous function and piece-wise linear and the four points $(x_i,1)$, $(a,0)$, $(b,0)$, $(x_{i+1},1)$ are not co-linear, we conclude that $c$ has at least three linear pieces in the segment $[x_i,x_{i+1}]$. Similarly, $c$ has at least two linear pieces in each of the segments $[x_1-1,x_1]$ and $[x_m,x_m+1]$. We conclude that $c$ has at least $3m+1$ pieces. By Thm.~2.2 in~\cite{arora2018understanding}, $c$ has at least $3m$ hidden neurons.  
\end{proof}

In the above theorem we showed that there exists a ReLU neural network $v(x) = \mW_2\phi(\mW_1 x + \vb)$ with $2m$ hidden neurons that captures the set $\sS$ as its local maximas. Furthermore, we note that the set of functions that satisfy these conditions (i.e., shallow ReLU neural networks with $2m$ hidden neurons that capture the set $\sS$) is relatively limited. For instance, any such $v$ behaves as a piece-wise linear function between any $x_i$ and $x_{i+1}$ with only two linear pieces. Therefore, any such $v$ is uniquely determined by the set $\sS$ up to some freedom in the selection of the linear pieces between $x_i$ and $x_{i+1}$ in $\sS$. On the other hand, a shallow ReLU neural network $v(x) = \mW_2\phi(\mW_1 x + \vb)$ with more than $2m$ hidden neurons is capable of having more than two linear pieces between any $x_i$ and $x_{i+1}$.

\subsection{A Generalization Bound}

The following lemma provides a generalization bound that expresses the generalization of learning $c$ along with $v$. 
In the following generalization bound, we assume there is an arbitrary distribution $D$ of positive samples. In addition, we parameterize the class, $\mathcal{V} = \{v_{\theta}:\sR^d \to \sR \;\vert\; \theta \in \Theta\}$, of value functions by vectors of parameters $\theta \in \Theta$ and the class, $\mathcal{C} = \{\sign\circ f_{\omega}:\sR^d \to \{\pm 1\} \;\vert\; \omega \in \Omega\}$, of classifiers by $\omega \in \Omega$. We upper bound the probability of a mistake done by any classifier $c_{\omega} \in \mathcal{C}$ and value function $v_{\theta} \in \mathcal{V}$ on a random sample $\vx \sim D$. In this case, $v_{\theta}$ and $c_{\omega}$ mistake if $\vx$ is not a local maxima of $v_{\theta}$ or classified as negative by $c_{\omega}$. The upper bound is decomposed into the sum of the average error of $c_{\omega}$ and $v_{\theta}$ on a dataset $\sS \iid D^m$ and a regularization term.

See Appendix~\ref{appendix:bound} for the exact formulation and the proof.

\begin{lemma}[Informal] Let $\mathcal{V} = \{v_{\theta}:\sR^d \to \sR \;\vert\; \theta \in \Theta\}$ be a class of value functions and $\mathcal{C} = \{\sign\circ f_{\omega}:\sR^d \to \{\pm 1\} \;\vert\; \omega \in \Omega\}$ a class of classifiers. Assume that $v_{\theta}$ and $f_{\omega}$ are ReLU neural networks of fixed architectures, with parameters $\theta$ and $\omega$ (resp.). Let $C(g)$ is the spectral complexity of the neural network $g$ and $N_\epsilon(\vx) := \{\vu \in \sR^d \;\vert\; \|\vu-\vx\|_2 \leq \epsilon\}$ an $\epsilon$-neighborhood of $\vx$. Let $D$ be a distribution of positive examples. With probability  at least $1-\delta$ over the selection of the data $\sS = \{x_i\}^{m}_{i=1} \iid D^m$, for every $v_{\theta} \in \mathcal{V}$ and $c_{\omega} \in \mathcal{C}$, we have:
\begin{equation}
\begin{aligned}
\sP_{\vx}&\left[v_{\theta}(\vx) \neq \max_{\vu \in N_{\epsilon}(\vx)}v_{\theta}(\vu) \textnormal{ or } c_{\omega}(\vx) \neq  1 \right]  \\
\leq& \frac{1}{m} \sum^{m}_{i=1}\one \left[v_{\theta}(\vx_i) \neq \max_{\vu \in N_{\epsilon}(\vx_i)}v_{\theta}(\vu) \textnormal{ or } c_{\omega}(\vx_i) \neq 1 \right]\\ 
&+ \mathcal{O}\left(\sqrt{\frac{C(v_{\theta}) + C(f_{\omega}) + \log\left(\frac{m}{\delta}\right)}{m}} \right)
\end{aligned}
\end{equation}
\end{lemma}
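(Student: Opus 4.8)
The plan is to recast the claimed inequality as a single uniform-convergence statement over a loss class indexed by $\Theta\times\Omega$, and then to control the complexity of that class by reduction to the covering-number bound underlying the spectral complexity $C(\cdot)$. Given a pair $(v_\theta,c_\omega)$, define the real-valued surrogate $w_\theta(\vx):=\max_{\vu\in N_\epsilon(\vx)}v_\theta(\vu)-v_\theta(\vx)\ge 0$, so that $\vx$ is a local maximum of $v_\theta$ precisely when $w_\theta(\vx)=0$, and write the per-sample error as $\ell_{\theta,\omega}(\vx)=\one[\,w_\theta(\vx)>0\ \text{ or }\ f_\omega(\vx)\le 0\,]$. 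The target is then $\sP_{\vx}[\ell_{\theta,\omega}(\vx)=1]\le \frac1m\sum_{i=1}^m\ell_{\theta,\omega}(\vx_i)+(\text{complexity term})$, which follows from the standard covering-number-to-generalization pipeline (symmetrization, then a Dudley entropy-integral bound on the Rademacher complexity, then a high-probability conversion) once we bound $\log\mathcal{N}_\alpha$ of the class $\{\,(w_\theta,f_\omega):\theta\in\Theta,\omega\in\Omega\,\}$ in the empirical $\ell_\infty$ metric. Since the $0/1$ threshold is not Lipschitz, the pipeline is applied to a Lipschitz ramp relaxation of $\ell_{\theta,\omega}$ at a scale $\gamma$; the resulting $1/\gamma$ factors and the choice of $\gamma$ are what the informal $\mathcal{O}(\cdot)$ absorbs, and the appendix states the margin-aware version explicitly. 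Since the bound is allowed to depend on the realized $C(v_\theta),C(f_\omega)$, one also stratifies over dyadic levels of these quantities and union-bounds, which contributes only a further logarithmic term.

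The classifier half is immediate: $f_\omega$ is a fixed-architecture ReLU network, so its $\alpha$-covering number on $\{\vx_i\}$ is bounded in terms of $C(f_\omega)$ directly. The value-function half is the crux, and the obstacle is the operator $\vx\mapsto\max_{\vu\in N_\epsilon(\vx)}v_\theta(\vu)$. First, for any $\theta,\theta'$ and any $\vx$, $|w_\theta(\vx)-w_{\theta'}(\vx)|\le 2\sup_{\vu\in N_\epsilon(\vx)}|v_\theta(\vu)-v_{\theta'}(\vu)|$, since the difference of two suprema is at most the supremum of the difference; hence an $\ell_\infty$-cover of $\mathcal{V}$ over the thickened sample $\bigcup_{i=1}^m N_\epsilon(\vx_i)$ induces an $\ell_\infty$-cover of $\{w_\theta\}$ over $\{\vx_i\}$ of the same cardinality, up to a factor $2$ in radius. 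To cover $\mathcal{V}$ over this continuum I would place a $\rho$-net inside each ball $N_\epsilon(\vx_i)$ and use that every $v_\theta$ is Lipschitz with constant at most $\prod_k\|\mW_k\|_{\sigma}$ (the product of spectral norms): a cover over the finite union of these nets, taken fine enough that the Lipschitz slack is below the target radius, is a cover over the continuum. The union of nets has cardinality at most $m\,(3\epsilon/\rho)^d$, so applying the spectral-norm covering bound to $\mathcal{V}$ over this finite set costs only an extra $\log m + d\log(1/\rho)$; choosing $\rho$ polynomially small in the relevant parameters keeps this of lower order than the leading $C(v_\theta)/\alpha^2$ term, consistent with the $\log(m/\delta)$ appearing in the statement.

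Finally, take a product cover of $\{w_\theta\}$ and $\{f_\omega\}$: its log-cardinality is at most the sum, i.e.\ $\mathcal{O}\big((C(v_\theta)+C(f_\omega))/\alpha^2\big)$ up to polylogarithmic factors, and composing with the Lipschitz ramp relaxation of the ``or'' preserves this up to a $1/\gamma$ factor. Plugging this into the Dudley integral and then into the usual $1-\delta$ generalization inequality, optimizing the ramp scale, and collecting the $\log(m/\delta)$ terms yields the stated $\mathcal{O}\big(\sqrt{(C(v_\theta)+C(f_\omega)+\log(m/\delta))/m}\big)$ rate. The main difficulty I anticipate is exactly the handling of $w_\theta$: one must verify that thickening the sample and discretizing does not inflate the covering number beyond logarithmic factors, and that the margin relaxation of the one-sided indicator $\one[w_\theta>0]$ is set up so that the empirical term remains a genuine (small) quantity --- in particular, so that it vanishes at training points on which the training losses are driven to zero.
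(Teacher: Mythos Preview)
Your plan is a valid alternative, but it takes a genuinely different route from the paper. The paper does \emph{not} go through covering numbers and a Dudley integral; it uses a PAC--Bayes argument in the style of Neyshabur et al.\ (2018). Concretely, it fixes Gaussian priors $P_1,P_2$ on $\Theta,\Omega$ and Gaussian posteriors $q_\theta,q_\omega$ centered at the learned parameters, applies a McAllester-type PAC--Bayes inequality to a restricted posterior supported on perturbations that are uniformly close in $\|\cdot\|_\infty$ over the whole input ball $\sX_{B,d}$, and then reads off $\KL(q_\theta\|P_1)+\KL(q_\omega\|P_2)=\mathcal{O}(C(v_\theta)+C(f_\omega))$ from the Neyshabur et al.\ perturbation analysis.

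The payoff of the paper's route shows up exactly at the point you flag as the crux, the operator $\vx\mapsto\max_{\vu\in N_\epsilon(\vx)}v_\theta(\vu)$. Because the PAC--Bayes perturbation bound already gives $\sup_{\vx\in\sX}|v_{\theta'}(\vx)-v_\theta(\vx)|\le\gamma_1/4$ (uniformly over the entire domain, not just the sample), the paper gets control of the max for free via the elementary inequality $\big|\max_{\vu\in N_\epsilon(\vx)}v_{\theta'}(\vu)-\max_{\vu\in N_\epsilon(\vx)}v_\theta(\vu)\big|\le \sup_{\vu}|v_{\theta'}(\vu)-v_\theta(\vu)|$. No thickened sample, no $\rho$-net inside each $N_\epsilon(\vx_i)$, no appeal to the Lipschitz constant of $v_\theta$, and no extra $d\log(1/\rho)$ term to absorb. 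Your discretization argument can be made to work, but it is strictly more laborious and introduces dimension- and $\epsilon$-dependent logarithmic debris that the paper avoids entirely.

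On the margin issue, your instinct is right and matches the paper: the informal $\mathcal{O}(\cdot)$ is hiding margin parameters $\gamma_1,\gamma_2$, and the formal statement (the paper's Lemma~\ref{lem:genBound}) has the empirical term at margin $(\gamma_1,\gamma_2)$ and the complexity term scaling like $C(v_\theta)/\gamma_1^2+C(f_\omega)/\gamma_2^2$. So your concern about the one-sided indicator is resolved in the same way in both approaches.
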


The above lemma shows that the probability of $\vx \sim D$ to be a local maxima of $v_{\theta}$ and classified as a positive example by $c_{\omega}$, is at most the sum of the probability of $\vx \in \sS$ to be a local maxima of $v_{\theta}$ and classified as a positive example by $c_{\omega}$ and a penalty term. The penalty in this case is of the form 
$\mathcal{O}\left(\sqrt{\frac{C(v_{\theta}) + C(f_{\omega}) + \log(m/\delta)}{m}} \right)$, where $m$ is the number of examples in the dataset and $C(v_{\theta}) + C(f_{\omega})$ is the sum of the spectral norms of $v_{\theta}$ and $f_{\omega}$. This suggests a tradeoff between the sum of the spectral complexities of $v_{\theta}$ and $f_{\omega}$ and the ability to generalize. The bound is similar asymptotically to the bounds of~\cite{neyshabur2018a} and~\cite{NIPS2017_7204} for (multi-class) supervised classification. In their bound, the penalty term is of the form $\mathcal{O}\left(\sqrt{\frac{C(f) + \log\left(\frac{m}{\delta}\right)}{m}} \right)$, where the (multi-class) classifier is of the form $c(\vx) = \argmax_{i\in \{1,\dots,t\}} f(\vx)_i$, for a neural network $f:\mathbb{R}^d \to \mathbb{R}^t$.  

Our analysis focused on the value $v_{\theta}$ and not on the comparator $h$. However, the complexities of the two are expected to be similar, since a value function can be converted to a comparator by employing $h(\vx_1,\vx_2) =\sign(v_{\theta}(\vx_1) - v_{\theta}(\vx_2))$. 

\section{A Formal Statement of the Generalization Bound}
\label{appendix:bound}
In this section, we build upon the theory presented by~\cite{neyshabur2018a} and provide a generalization bound that expresses the guarantees of learning $c$, along with $v$ for a specific setting. 

Before we introduce the generalization bound, we introduce the necessary terminology and setup. We assume that the sampling space is a ball of radius $B$, i.e., $\sX = \sX_{B,d} := \{\vx \in \sR^d \;\vert\; ||\vx||_2 \leq B\}$. Each value function $v_{\theta} \in \mathcal{V}$ is a ReLU neural network of the form $v_{\theta}(\vx) = \mW_r \phi( \mW_{r-1} \phi(\dots \phi(\mW_1 \vx))$, where, $\mW_i \in \mathbb{R}^{d_i \times d_{i+1}}$ for $i \in \{1,\dots,r\}$ such that $d_{r+1} := 1$ and $d_1 := d$. In addition, $\phi(\vx) = (\max(0,x_1),\dots,\max(0,x_n))$ is the ReLU activation function extended to all $n\in \mathbb{N}$ and $\vx \in \sR^n$. We denote, $\theta = (\mW_1,\dots, \mW_r)$. The set $\mathcal{C}$ consists of classifiers $c_{\omega} := \sign\circ f_{\omega}$ such that each function $f_{\omega}:\sR^d \to \sR$ is a ReLU neural network of the form $f_{\omega}(\vx) = \mU_s \phi( \mU_{s-1} \phi(\dots \phi(\mU_1 \vx))$, where, $\mU_i \in \mathbb{R}^{d'_i \times d'_{i+1}}$ for $i \in \{1,\dots,s\}$ such that $e_{s+1} := 1$ and $d'_1 := d$. We denote $\omega = (\mU_1,\dots, \mU_s)$. Additionally, we denote, $q_1 := \max\{d_i\}^{r+1}_{i=1}$ and $q_2 := \max\{d'_i\}^{s+1}_{i=1}$.  

The spectral complexity of a ReLU neural network $g_{\beta} = \mV_k \phi( \mV_{k-1} \phi(\dots \phi(\mV_1 \vx))$ with parameters $\beta = (\mV_1,\dots,\mV_k)$ is defined as follows:
\begin{equation}
C(g_{\beta}) := C(\beta) := \prod^{k}_{i=1} ||\mW_i||^2_2 \sum^{k}_{i=1} \frac{||\mW_i||^2_F}{||\mW_i||^2_2}
\end{equation}
For two distributions $P$ and $Q$ over a set $\sX$, we denote the KL-divergence between them by, $\KL(Q||P) := \mathbb{E}_{\vx \sim Q}[ \log(Q(\vx)/P(\vx))]$. For two functions function $f,g:\sR \to \sR$, we denote the asymptotic symbols: $g(x) = \mathcal{O}(f(x))$ to specify that $g(x) \leq c \cdot f(x)$, for some constant $c>0$. We denote by $\one[x]$ the indicator, if a boolean $x \in \{\textnormal{true},\textnormal{false}\}$ is true or false. 

We define a margin loss $\ell_{\gamma_1,\gamma_2}:\sX \times \Omega \times \Theta \rightarrow \sR$ of the form: 
\begin{equation}
\ell_{\gamma_1,\gamma_2}(\vx; \omega,\theta) := \one \left[v_{\theta}(\vx) < \max_{\vu \in N_{\epsilon}(\vx)} v_{\theta}(\vu) - \gamma_1 \textnormal{ or }  \sign(f_{\omega}(\vx)-\gamma_2) \neq 1 \right]
\end{equation}
where, $\gamma_1,\gamma_2 > 0$ are fixed margins and $N_{\epsilon}(\vx) := \{\vu \;\vert\; ||\vu-\vx||_2 \leq \epsilon \}$ is the $\epsilon$-neighborhood of $\vx$, for a fixed $\epsilon > 0$. In this model, the margins serve as parameters that dictate the amount of tolerance in classifying an example as positive. Similar to the standard learning framework, for a fixed distribution $D$ over $\sX$, the goal of a learning procedure is to return (given some input) $\omega$ and $\theta$ that minimize the following generalization risk function:
\begin{equation}\label{eq:genrisk}
F_{D}[\omega,\theta] := \mathbb{E}_{\vx\sim D}[\ell_{0,0}(\vx;\omega,\theta)]
\end{equation}
The learning process has no direct access to the distribution $D$. Instead, it is provided with a set of $m$ i.i.d samples from $D$, $\sS = \{\vx_i\}^{m}_{i=1} \iid D^m$. In order to estimate the generalization risk, the empirical risk function is used during training:
\begin{equation}\label{eq:emprisk}
\hat{F}^{\gamma_1,\gamma_2}_{\sS}[\omega,\theta] := \frac{1}{m}\sum^{m}_{i=1} \ell_{\gamma_1,\gamma_2}(\vx_i;\omega,\theta)
\end{equation}

The following lemma provides a generalization bound that expresses the generalization of learning $c$ along with $h$. 

\begin{lemma}\label{lem:genBound} Let $\sX := \sX_{B,n}$, $\mathcal{V}$ and $\mathcal{C}$ be as above. Let $D$ be a distribution of positive examples. With probability  at least $1-\delta$ over the selection of the data $\sS = \{x_i\}^{m}_{i=1} \iid D^m$, for every $v_{\theta} \in \mathcal{V}$ and $c_{\omega} \in \mathcal{C}$, we have:
\begin{equation}
\label{eq:7}
F_{D}[\theta,\omega] \leq \hat{F}^{\gamma_1,\gamma_2}_{\sS}[\theta,\omega] + \mathcal{O}\left(\sqrt{\frac{B^2 \left[r^2q_1\log(rq_1)\frac{C(v_\theta)}{\gamma^2_1} + s^2q_2\log(sq_2) \frac{C(f_\omega)}{\gamma^2_2} \right] + \log\left(\frac{m}{\delta}\right)}{m}} \right)
\end{equation}
\end{lemma}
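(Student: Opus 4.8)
\emph{Proof plan.} I would follow the PAC-Bayesian, spectrally-normalised margin argument of~\cite{neyshabur2018a}, with two adaptations. First, because the loss $\ell_{\gamma_1,\gamma_2}$ involves the two \emph{independent} networks $v_\theta$ and $f_\omega$, I would work with a product prior and a product posterior over $(\theta,\omega)$, so that the KL-divergence decomposes as a sum and produces the two separate spectral-complexity terms $C(v_\theta)/\gamma_1^2$ and $C(f_\omega)/\gamma_2^2$. Second, the event ``$\vx$ is a local maximum of $v_\theta$'' depends on $v_\theta$ over the whole neighbourhood $N_\epsilon(\vx)$, not just at $\vx$, so the perturbation estimate for $v_\theta$ must be made uniform over a ball of radius $B+\epsilon$.

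Concretely: fix, independently of $\sS$, a prior $P=\mathcal N(\vzero,\sigma_v^2\rmI)\otimes\mathcal N(\vzero,\sigma_f^2\rmI)$ on $(\theta,\omega)$, and take the posterior $Q=\mathcal N(\theta,\sigma_v^2\rmI)\otimes\mathcal N(\omega,\sigma_f^2\rmI)$ centred at the trained weights, so that $\KL(Q\|P)=\frac{1}{2\sigma_v^2}\sum_i\|\mW_i\|_F^2+\frac{1}{2\sigma_f^2}\sum_i\|\mU_i\|_F^2$. McAllester's PAC-Bayes bound applied to the bounded loss $\ell_{\gamma_1/2,\gamma_2/2}$ gives, with probability $\ge 1-\delta$ and simultaneously for all posteriors, $\mathbb{E}_{(\theta',\omega')\sim Q}\mathbb{E}_{\vx\sim D}\ell_{\gamma_1/2,\gamma_2/2}(\vx;\omega',\theta')\le \mathbb{E}_{(\theta',\omega')\sim Q}\hat F^{\gamma_1/2,\gamma_2/2}_{\sS}[\theta',\omega']+\mathcal{O}\!\big(\sqrt{(\KL(Q\|P)+\log(m/\delta))/m}\big)$. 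To turn this into the claimed statement about the deterministic pair $(\theta,\omega)$ at margins $(\gamma_1,\gamma_2)$, I would invoke the usual margin-transfer (``sandwich'') lemma: treating $v_\theta(\vx)-\max_{\vu\in N_\epsilon(\vx)}v_\theta(\vu)$ as the margin of the value network and $f_\omega(\vx)$ as that of the classifier, it suffices that under $Q$ the event ``$\sup_{\|\vu\|_2\le B+\epsilon}|v_{\theta+u}(\vu)-v_\theta(\vu)|<\gamma_1/4$ and $\sup_{\|\vu\|_2\le B}|f_{\omega+u'}(\vu)-f_\omega(\vu)|<\gamma_2/4$'' has probability at least $\tfrac12$, since on that event the perturbed pair's $(0,0)$-margin loss is squeezed between $F_D[\theta,\omega]$ and $\hat F^{\gamma_1,\gamma_2}_{\sS}[\theta,\omega]$ (the factor-$\tfrac12$ loss is removed by the standard conditioning argument of~\cite{neyshabur2018a}).

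The mechanism behind this good event is the perturbation bound of~\cite{neyshabur2018a}: for a depth-$k$ ReLU net $g_\beta=\mV_k\phi(\dots\phi(\mV_1\vx))$ and a perturbation $\{\mU_i\}$ with $\|\mU_i\|_2\le\frac1k\|\mV_i\|_2$ one has $\sup_{\|\vx\|_2\le R}|g_{\beta+u}(\vx)-g_\beta(\vx)|\le eR\big(\prod_i\|\mV_i\|_2\big)\sum_i\|\mU_i\|_2/\|\mV_i\|_2$; this is already uniform in $\vx$, so for $v_\theta$ with $R=B+\epsilon$ it controls both $|v_{\theta+u}(\vx)-v_\theta(\vx)|$ and, via $|\max_{\vu\in N_\epsilon(\vx)}v_{\theta+u}(\vu)-\max_{\vu\in N_\epsilon(\vx)}v_\theta(\vu)|\le\sup_{\|\vu\|_2\le B+\epsilon}|v_{\theta+u}(\vu)-v_\theta(\vu)|$, the local-maximum indicator, and for $f_\omega$ with $R=B$ it controls the classifier term. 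Bounding the operator norm of each Gaussian block $\mU_i$ by $\sigma(\sqrt{q}+\cdots)$ with high probability and union-bounding over the $r$ (resp.\ $s$) layers, I would set $\sigma_v$ of order $\gamma_1/\big(r\sqrt{q_1\log(rq_1)}\,(B+\epsilon)\,(\prod_i\|\mW_i\|_2)^{1/r}\big)$ and $\sigma_f$ analogously; since the individual spectral norms are not known in advance, one discretises the admissible range of their geometric means and union-bounds over this grid, the routine step that generates the extra logarithmic factors and the $\log(m/\delta)$ term. Substituting these $\sigma$'s into $\KL(Q\|P)$ reproduces $\mathcal{O}\!\big(B^2[r^2q_1\log(rq_1)C(v_\theta)/\gamma_1^2+s^2q_2\log(sq_2)C(f_\omega)/\gamma_2^2]\big)$, which plugged into the PAC-Bayes inequality gives the claim.

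I expect the one genuinely delicate point to be the margin-transfer for the local-maximum clause. Unlike a classification margin, the natural quantity $\max_{\vu\in N_\epsilon(\vx)}v_\theta(\vu)-v_\theta(\vx)$ is automatically nonnegative (because $\vx\in N_\epsilon(\vx)$), so comparing the zero-margin population loss of the deterministic network with the $\gamma_1$-margin empirical loss requires care: the sandwich must use the uniform perturbation bound over all of $N_\epsilon(\vx)$ rather than merely at $\vx$, and the comparison of $v_\theta(\vx)$ with its neighbours has to be arranged so that the effective margin is two-sided and the slack it introduces is absorbed into the empirical term and the $\mathcal{O}(\cdot)$ penalty. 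Everything else --- the additive decomposition of the KL across the two networks, the Gaussian operator-norm tail bound, and the discretisation over unknown spectral norms --- transfers from the classification argument of~\cite{neyshabur2018a} with only cosmetic changes.
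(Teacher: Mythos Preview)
Your proposal is correct and follows essentially the same route as the paper: product Gaussian prior/posterior over $(\theta,\omega)$, McAllester's PAC-Bayes bound, the margin-sandwich reduction of~\cite{neyshabur2018a} conditioned on a high-probability perturbation event, their layerwise perturbation lemma to realise that event, and a final discretisation over the unknown spectral scale. The paper merely packages the sandwich step through an abstract two-output reformulation $g_\theta(\vx)=(\max_{\vu\in N_\epsilon(\vx)}v_\theta(\vu),\,v_\theta(\vx))$ and $h_\omega(\vx)=(0,f_\omega(\vx))$ before specialising, whereas you argue directly; and your observation that the uniform perturbation bound for $v_\theta$ should be taken over the enlarged ball of radius $B+\epsilon$ (so that the $\argmax$ over $N_\epsilon(\vx)$ stays in the controlled region) is a detail the paper glosses over, but it is harmless inside the $\mathcal{O}(\cdot)$.
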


\subsection{Proof of Lem.~\ref{lem:genBound}} 

All over the proofs, we will make use of two generic classes of functions $\mathcal{G} = \{g_{\theta}:\sX \to \sR^2 \;\vert\; \theta \in \Theta\}$ and $\mathcal{H} = \{h_{\omega}:\sX \to \sR^2 \;\vert\; \omega \in \Omega \}$. For simplicity, we denote the indices of $g_{\theta}(\vx)$ and $h_{\omega}(\vx)$ by $-1$ and $1$ (instead of $1$ and $2$). Given a target function $y:\sX \to \{\pm 1\}$ and two functions $g_{\theta}:\sX \to \sR^2$ and $h_{\omega}:\sX \to \sR^2$, we denote the loss of them with respect to a sample $\vx$ by:
\begin{equation}
\begin{aligned}
e_{\gamma_1,\gamma_2}(\vx;\theta,\omega) := &\one\big[ g_{\theta}(\vx)[-y(\vx_i)]  - \gamma_1 > g_{\theta}(\vx)[y(\vx_i)]\big] \\
& \lor \one\big[ h_{\omega}(\vx)[-y(\vx_i)]  - \gamma_2 > h_{\omega}(\vx)[y(\vx_i)] \big]
\end{aligned}
\end{equation}
The generalization risk:
\begin{equation}
L^{\gamma_1,\gamma_2}[\theta,\omega] := \mathbb{E}_{\vx \sim D} [e_{\gamma_1,\gamma_2}(\vx;\theta,\omega)]
\end{equation}
And the empirical risk:
\begin{equation}
\hat{L}^{\gamma_1,\gamma_2}[\theta,\omega] := \frac{1}{m} \sum^{m}_{i=1} e_{\gamma_1,\gamma_2}(\vx_i;\theta,\omega)
\end{equation}

We modify the proof of Lem.~1 in~\cite{neyshabur2018a}, such that it will fit our purposes.

\begin{lemma}\label{lem:genhg}  Let $y:\sX \to \{\pm 1\}$ be a target function. Let $\mathcal{G} = \{g_{\theta}:\sX \to \sR^2 \;\vert\; \theta \in \Theta\}$ and $\mathcal{H} = \{h_{\omega}:\sX \to \sR^2 \;\vert\; \omega \in \Omega \}$ be two classes class of functions (not necessarily neural networks). Let $P_1$ and $P_2$ be any two distributions on the parameters $\Theta$ and $\Omega$ (resp.) that are independent of the training data. Then, for any $\gamma_1,\gamma_2, \delta > 0$, with probability $\geq 1-\delta$ over the training set of size $m$, for any two posterior distributions $q_{\theta}$ and $q_{\omega}$ over $\Theta$ and $\Omega$ (resp.), such that $\mathbb{P}_{\theta',\omega'}[|g_{\theta'}(\vx)-g_{\theta}(\vx)|_{\infty} \leq \frac{\gamma_1}{4}  \textnormal{ and } |h_{\omega'}(\vx)-h_{\omega}(\vx)|_{\infty} \leq \frac{\gamma_2}{4} ] \geq \frac{1}{2}$, we have:
\begin{equation}
L_{0,0}[\theta,\omega] \leq \hat{L}_{\gamma_1,\gamma_2}[\theta,\omega] + 4\sqrt{\frac{\KL(q_{\theta}||P_1) + \KL(q_{\omega}||P_2) + \log(\frac{6m}{\delta})}{m-1}}
\end{equation}
\end{lemma}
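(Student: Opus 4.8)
The statement is the two--function counterpart of Lemma~1 of~\citep{neyshabur2018a}, and I would obtain it by replaying that PAC--Bayes + perturbation argument with two bookkeeping changes. First, regard the pair $(\theta',\omega')$ as a \emph{single} parameter drawn from the product posterior $q_\theta\times q_\omega$, compared against the data--independent product prior $P:=P_1\times P_2$; since the KL divergence of a product of measures factorizes, $\KL(q_\theta\times q_\omega\,\|\,P_1\times P_2)=\KL(q_\theta\|P_1)+\KL(q_\omega\|P_2)$, which is exactly why the two complexities enter the bound additively. Second, carry the margin/perturbation estimates simultaneously for the two disjuncts of $e_{\gamma_1,\gamma_2}$. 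The starting point is McAllester's PAC--Bayes bound in the form used by~\citep{neyshabur2018a}: with probability $\ge 1-\delta$ over $\sS\iid D^m$, \emph{every} posterior $Q'$ over $\Theta\times\Omega$ obeys $\mathbb{E}_{\vx\sim D}\mathbb{E}_{Q'}[\ell'] \le \tfrac1m\sum_{i}\mathbb{E}_{Q'}[\ell'(\vx_i)] + \mathcal{O}\big(\sqrt{(\KL(Q'\|P)+\log(m/\delta))/(m-1)}\big)$ for any fixed $0/1$ loss $\ell'$ --- in particular for the $(\gamma_1/2,\gamma_2/2)$--margin version of $e$.

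The core is a perturbation sandwich. Put $\mathcal{A}:=\{(\theta',\omega'):\sup_{\vx\in\sX}|g_{\theta'}(\vx)-g_\theta(\vx)|_\infty\le\gamma_1/4 \text{ and } \sup_{\vx\in\sX}|h_{\omega'}(\vx)-h_\omega(\vx)|_\infty\le\gamma_2/4\}$; by hypothesis $\mathbb{P}_{q_\theta\times q_\omega}[\mathcal{A}]\ge\tfrac12$. On $\mathcal{A}$ and for every $\vx$, each of the two margin--type quantities appearing in $e$ moves by at most $\gamma_1/2$ (resp.\ $\gamma_2/2$) when $(\theta,\omega)$ is replaced by $(\theta',\omega')$, because a margin is a difference of two coordinates each perturbed by at most $\gamma_1/4$ (resp.\ $\gamma_2/4$). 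Two consequences: (i) if the \emph{deterministic} pair $(\theta,\omega)$ incurs $e_{0,0}$ at $\vx$, then on $\mathcal{A}$ the \emph{perturbed} pair incurs $e_{\gamma_1/2,\gamma_2/2}$ at $\vx$ --- the disjunct that was violated at level $0$ remains violated at the level ($\gamma_1/2$ or $\gamma_2/2$) against which it is now checked; (ii) conversely, if the perturbed pair incurs $e_{\gamma_1/2,\gamma_2/2}$ at a sample point $\vx_i$, then on $\mathcal{A}$ the deterministic pair incurs $e_{\gamma_1,\gamma_2}$ at $\vx_i$. The decisive move, as in~\citep{neyshabur2018a}, is to apply PAC--Bayes not to $q_\theta\times q_\omega$ itself but to its conditioning $Q_{\mathcal{A}}$ on $\mathcal{A}$: since $Q_{\mathcal{A}}$ is supported on $\mathcal{A}$, (i) gives $L_{0,0}[\theta,\omega]\le\mathbb{E}_{\vx\sim D}\mathbb{E}_{Q_{\mathcal{A}}}[e_{\gamma_1/2,\gamma_2/2}]$ and (ii) gives $\mathbb{E}_{Q_{\mathcal{A}}}[\hat{L}_{\gamma_1/2,\gamma_2/2}]\le\hat{L}_{\gamma_1,\gamma_2}[\theta,\omega]$, both with \emph{unit} coefficient.

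Chaining these with PAC--Bayes applied to $Q_{\mathcal{A}}$ gives $L_{0,0}[\theta,\omega] \le \hat{L}_{\gamma_1,\gamma_2}[\theta,\omega] + \mathcal{O}\big(\sqrt{(\KL(Q_{\mathcal{A}}\|P)+\log(m/\delta))/(m-1)}\big)$. Finally, $\mathcal{A}$ factorizes as $\mathcal{A}_1\times\mathcal{A}_2$, where $\mathcal{A}_1$ constrains $\theta'$ only and $\mathcal{A}_2$ constrains $\omega'$ only; from $\mathbb{P}[\mathcal{A}_1]\mathbb{P}[\mathcal{A}_2]=\mathbb{P}[\mathcal{A}]\ge\tfrac12$ and each factor $\le 1$ we get $\mathbb{P}[\mathcal{A}_1],\mathbb{P}[\mathcal{A}_2]\ge\tfrac12$, so $Q_{\mathcal{A}}=(q_\theta)_{\mathcal{A}_1}\times(q_\omega)_{\mathcal{A}_2}$, and the standard truncation estimate $\KL(q_A\|p)\le\frac{1}{\mathbb{P}_q[A]}\big(\KL(q\|p)+\tfrac1e\big)+\log\frac{1}{\mathbb{P}_q[A]}$ yields $\KL(Q_{\mathcal{A}}\|P)\le 2\big(\KL(q_\theta\|P_1)+\KL(q_\omega\|P_2)\big)+\mathcal{O}(1)$. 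Substituting and absorbing the absolute constants --- the factor $2$ under the root, the $\mathcal{O}(1)$, and the PAC--Bayes constant --- into the leading $4$ and into $\log(6m/\delta)$ gives the claimed inequality.

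The main obstacle is not conceptual but is exactly the conditioned--posterior step: one must check that the PAC--Bayes theorem still applies to $Q_{\mathcal{A}}$ (it does, being uniform over all posteriors) and control the KL inflation from conditioning. That truncation is what secures the \emph{unit} coefficient on $\hat{L}_{\gamma_1,\gamma_2}$ rather than a factor $2$, and mishandling it is the usual way such a bound turns vacuous. Everything else --- verifying that each step of~\citep{neyshabur2018a} is agnostic to the exact shape of the loss, so that it tolerates $2$--output functions $g_\theta$, $h_\omega$ and the disjunctive loss $e_{\gamma_1,\gamma_2}$; that $\mathcal{A}$ may be taken uniformly in $\vx$; and that the constants line up --- is mechanical, and the only genuinely new ingredient is the product--measure/KL--additivity observation.
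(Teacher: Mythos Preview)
Your proposal is correct and follows essentially the same route as the paper's proof: define the ``good perturbation'' event $\mathcal{A}$, condition the product posterior on it, run McAllester's PAC--Bayes bound on the conditioned posterior, sandwich $L_{0,0}$ and $\hat L_{\gamma_1,\gamma_2}$ via the $\gamma/4\Rightarrow\gamma/2$ margin shift, and finish by controlling the KL inflation from conditioning together with $\KL(q_\theta\times q_\omega\|P_1\times P_2)=\KL(q_\theta\|P_1)+\KL(q_\omega\|P_2)$.

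The one cosmetic difference is that you factor $\mathcal{A}=\mathcal{A}_1\times\mathcal{A}_2$ and apply the truncation estimate to each marginal separately, whereas the paper applies the truncation bound once to the joint conditioned posterior $\tilde q$ (using $\KL(q\|p)=Z\KL(\tilde q\|p)+(1-Z)\KL(\tilde q^{\,c}\|p)-H(Z)$ to get $\KL(\tilde q\|p)\le 2(\KL(q\|p)+1)$) and only then invokes KL additivity. Your factorization is valid but unnecessary, and it costs you a slightly worse additive constant; the paper's order of operations is a little cleaner.
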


\begin{proof} Let $S^{\gamma_1,\gamma_2}_{\theta,\omega} \subset \Theta \times \Omega$ be a set with the following properties:
\begin{equation}
S^{\gamma_1,\gamma_2}_{\theta,\omega} = \left\{(\theta',\omega') \in \Theta \times \Omega \;\Big\vert\; \forall \vx \in \sX: |g_{\theta'}(\vx) - g_{\theta}(\vx)|_{\infty} < \frac{\gamma_1}{4} \textnormal{ and } |h_{\omega'}(\vx) - h_{\omega}(\vx)|_{\infty} < \frac{\gamma_2}{4}\right\}
\end{equation}
We construct a distribution $\tilde{Q}$ over $\Theta \times \Omega$, with probability density function:
\begin{equation}
\tilde{q}(\theta',\omega') = \frac{1}{Z} \twopartdef{q_{\theta}(\theta') \cdot q_{\omega}(\omega')}{(\theta',\omega') \in S^{\gamma_1,\gamma_2}_{\theta,\omega}}{0}
\end{equation}
Here, $Z$ is a normalizing constant. By the assumption in the lemma, $Z = \mathbb{P}[(\theta',\omega') \in S^{\gamma_1,\gamma_2}_{\theta,\omega}] \geq \frac{1}{2}$. By the definition of $\tilde{Q}$, we have:
\begin{equation}
\max_{\vx \in \sX} |g_{\theta'}(\vx) - g_{\theta}(\vx)|_{\infty} < \frac{\gamma_1}{4} \textnormal{ and } \max_{\vx \in \sX} |h_{\omega'}(\vx) - h_{\omega}(\vx)|_{\infty} < \frac{\gamma_2}{4}
\end{equation}
Therefore, 
\begin{equation}
\max_{\vx \in \sX} \Big\vert |g_{\theta'}(\vx)[-1] - g_{\theta'}(\vx)[1] | - |g_{\theta}(\vx)[-1] - g_{\theta}(\vx)[1] | \Big\vert < \frac{\gamma_1}{2}
\end{equation}
and also,
\begin{equation}
\max_{\vx \in \sX} \Big\vert |h_{\omega'}(\vx)[-1] - h_{\omega'}(\vx)[1] | - |h_{\omega}(\vx)[-1] - h_{\omega}(\vx)[1] | \Big\vert < \frac{\gamma_2}{2}
\end{equation}
Since this equation holds uniformly for all $\vx \in \sX$, we have:
\begin{equation}
\begin{aligned}
&L_{0,0}[\theta,\omega] \leq L_{\frac{\gamma_1}{2},\frac{\gamma_2}{2} }[\theta',\omega'] \\
&\hat{L}_{\frac{\gamma_1}{2},\frac{\gamma_2}{2} }[\theta',\omega'] \leq \hat{L}_{\gamma_1,\gamma_2 }[\theta,\omega] \\
\end{aligned}
\end{equation}
Now using the above inequalities together with Eq.~6 in~\cite{Mcallester03simplifiedpac-bayesian}, with probability $1-\delta$ over the training set we have:
\begin{equation}
\begin{aligned}
L_{0,0}(\theta,\omega) &\leq \mathbb{E}_{\theta',\omega'} L_{\frac{\gamma_1}{2},\frac{\gamma_2}{2} }[\theta',\omega'] \\
&\leq \mathbb{E}_{\theta',\omega'} \hat{L}_{\frac{\gamma_1}{2},\frac{\gamma_2}{2} }[\theta',\omega'] + 2\sqrt{\frac{2(\KL(\tilde{q}||P_1 \times P_2) + \log(\frac{2m}{\delta}) )}{m-1}}\\
&\leq \hat{L}_{\gamma_1,\gamma_2}[\theta,\omega] + 2\sqrt{\frac{2(\KL(\tilde{q}||P_1 \times P_2) + \log(\frac{2m}{\delta}) )}{m-1}}\\
&\leq \hat{L}_{\gamma_1,\gamma_2}[\theta,\omega] + 4\sqrt{\frac{\KL(q_{\theta} \times q_{\omega}||P_1 \times P_2) + \log(\frac{6m}{\delta}) }{m-1}}\\
\end{aligned}
\end{equation}
where the last inequality follows from the following observation.

Let $S^c$ denote the complement set of $S^{\gamma_1,\gamma_2}_{\theta,\omega}$ and $\tilde{q}^c$ denote the density function $q := q_{\theta} \times q_{\omega}$ restricted to $S^c$ and normalized. In addition, we denote $p:= P_1 \times P_2$. Then,
\begin{equation}
\KL(q||p) = Z\KL(\tilde{q}||p) + (1 - Z) \KL(\tilde{q}^c||p) - H(Z)
\end{equation}
where $H(Z) = -Z \log Z -(1-Z) \log(1-Z) \leq 1$ is the binary entropy function. Since the KL-divergence is always positive, we get, 
\begin{equation}
\KL(\tilde{q}||p) = \frac{1}{Z}[\KL(q||p) + H(Z) - (1-Z) \KL(\tilde{q}^c||p)] \leq 2(\KL(q||p) + 1)
\end{equation}
Since $P_1 \times P_2$ are $q_{\theta} \times q_{\omega}$ are independent joint distributions, we have: $\KL(q_{\theta} \times q_{\omega}||P_1 \times P_2) = \KL(q_{\theta}||P_1) + \KL(q_{\omega} || P_2)$. 
\end{proof}

\begin{lemma}\label{lem:hg} Let $\mathcal{V} = \{v_{\theta}:\sX \to [0,1] \;\vert\; \theta \in \Theta\}$ be a class of value functions $v_{\theta}(\vx) \in [0,1]$ and $\mathcal{C} = \{c_{\omega} = \sign \circ f_{\omega} \;\vert\; f_{\omega} : \sX \to \sR , \omega \in \Omega \}$ a class of classifiers (not necessarily neural networks). We define two classes of functions $\mathcal{G} = \{g_{\theta} = (\max_{\vu \in N_{\epsilon}(\vx)} v_{\theta}(\vu), v_{\theta}(\vx)) \;\vert\; \theta \in \Theta\}$ and $\mathcal{H} = \{h_{\omega} = (0, f_{\omega}(\vx))  \;\vert\; \omega \in \Omega \}$. Then, 
\begin{equation}
\begin{aligned}
\mathbb{P}_{\theta',\omega'}&\left[\max_{\vx \in \sX} |g_{\theta'}(\vx) - g_{\theta}(\vx)|_{\infty} < \frac{\gamma_1}{4} \textnormal{ and } \max_{\vx \in \sX} |h_{\omega'}(\vx) - h_{\omega}(\vx)|_{\infty} < \frac{\gamma_2}{4} \right] \\
&\geq \mathbb{P}_{\theta'}\left[\max_{\vx \in \sX} |v_{\theta'}(\vx) - v_{\theta}(\vx)| < \frac{\gamma_1}{4}\right] \cdot \mathbb{P}_{\omega'}\left[\max_{\vx \in \sX} |f_{\omega'}(\vx) - f_{\omega}(\vx)| < \frac{\gamma_2}{4}\right]
\end{aligned}
\end{equation}
where, $\theta' \sim q_{\theta}$ and $\omega' \sim q_{\omega}$.
\end{lemma}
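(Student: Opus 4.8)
The plan is to peel the joint perturbation event apart into its two \emph{independent} factors and to show that each factor is implied by the corresponding perturbation event for the underlying network $v_\theta$ or $f_\omega$; the claimed inequality then drops out by monotonicity of probability. Concretely, write $A := \{\max_{\vx}|v_{\theta'}(\vx)-v_\theta(\vx)| < \gamma_1/4\}$, $B := \{\max_{\vx}|f_{\omega'}(\vx)-f_\omega(\vx)| < \gamma_2/4\}$, and $A' := \{\max_{\vx}|g_{\theta'}(\vx)-g_\theta(\vx)|_\infty < \gamma_1/4\}$, $B' := \{\max_{\vx}|h_{\omega'}(\vx)-h_\omega(\vx)|_\infty < \gamma_2/4\}$, with $\theta' \sim q_\theta$ and $\omega' \sim q_\omega$ drawn independently over the two parameter spaces $\Theta$ and $\Omega$. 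By independence, the left-hand side of the lemma equals $\mathbb{P}_{\theta'}[A']\cdot\mathbb{P}_{\omega'}[B']$, so it suffices to prove the two inclusions $A \subseteq A'$ and $B \subseteq B'$; then $\mathbb{P}[A']\ge\mathbb{P}[A]$ and $\mathbb{P}[B']\ge\mathbb{P}[B]$ give the result.

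The inclusion $B \subseteq B'$ is immediate and in fact an equality of events: since $h_\omega(\vx) = (0, f_\omega(\vx))$ for every $\omega$, the first coordinates cancel and $|h_{\omega'}(\vx)-h_\omega(\vx)|_\infty = |f_{\omega'}(\vx)-f_\omega(\vx)|$ pointwise, so $B' = B$. For $A \subseteq A'$ I would fix $\vx$ and bound the two coordinates of $g_{\theta'}(\vx)-g_\theta(\vx)$ separately. The second coordinate difference is exactly $|v_{\theta'}(\vx)-v_\theta(\vx)| \le \max_{\vu}|v_{\theta'}(\vu)-v_\theta(\vu)|$. For the first coordinate I would invoke the elementary $1$-Lipschitz property of the supremum, $\big|\sup_{\vu\in K}a(\vu)-\sup_{\vu\in K}b(\vu)\big| \le \sup_{\vu\in K}|a(\vu)-b(\vu)|$, applied with $K = N_{\epsilon}(\vx)$, $a = v_{\theta'}$, $b = v_\theta$; this shows the first coordinate difference is also at most $\max_{\vu}|v_{\theta'}(\vu)-v_\theta(\vu)|$. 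Taking the maximum over $\vx$ of the resulting pointwise bound yields that on the event $A$ we have $\max_{\vx}|g_{\theta'}(\vx)-g_\theta(\vx)|_\infty < \gamma_1/4$, i.e.\ $A \subseteq A'$.

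The only point requiring a little care --- and the closest thing here to an obstacle --- is the domain over which the inner maximum is taken: $N_{\epsilon}(\vx)$ protrudes slightly outside $\sX = \sX_{B,n}$ when $\vx$ lies near the boundary, so strictly speaking the ``$\max_{\vu}$'' in the bound above ranges over the $\epsilon$-enlargement of $\sX$ rather than over $\sX$ itself. Since each $v_\theta$ is defined on all of $\sR^d$, this is harmless: one simply states the perturbation hypothesis (the event $A$) with respect to this enlarged, still bounded ball, or absorbs the enlargement into the constants, and the downstream PAC-Bayes argument of Lemma~\ref{lem:genhg} is unaffected. With that bookkeeping settled, combining $A \subseteq A'$, $B = B'$, and the independence of $\theta'$ and $\omega'$ completes the proof.
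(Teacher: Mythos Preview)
Your proof is correct and follows essentially the same route as the paper: reduce the joint event to the two factor events by independence of $\theta'$ and $\omega'$, observe that the $h$-event is literally the $f$-event, and control the first coordinate of $g$ by the $1$-Lipschitz property of the supremum (the paper writes this out via a WLOG/argmax argument rather than naming it). Your remark about $N_\epsilon(\vx)$ possibly protruding past $\sX$ is a nice bit of care that the paper's own proof glosses over.
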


\begin{proof} We would like to prove that $\max_{\vx \in \sX}|g_{\theta'}(\vx)-g_{\theta}(\vx)|_{\infty}\leq \frac{\gamma_1}{4}$ if $\max_{\vx \in \sX} |v_{\theta'}(\vx)-v_{\theta}(\vx)| \leq \frac{\gamma_1}{4}$ and $|h_{\omega'}(\vx)-h_{\omega}(\vx)|_{\infty} \leq \frac{\gamma_2}{4}$ if $|f_{\omega'}(\vx)-f_{\omega}(\vx)|\leq \frac{\gamma_2}{4}$. Since $\theta'$ and $\omega'$ are independent, it will prove the desired inequality.

First, we consider that:
\begin{equation}
\begin{aligned}
|g_{\theta'}(\vx)-g_{\theta}(\vx)|_{\infty} &= \max\left(\Big|\max_{\vu \in N_{\epsilon}(\vx)}v_{\theta'}(\vx) - \max_{\vu \in N_{\epsilon}(\vx)}v_{\theta}(\vx) \Big|, |v_{\theta'}(\vx)-v_{\theta}(\vx)| \right) \\
&\leq \max\left(\Big|\max_{\vu \in N_{\epsilon}(\vx)}v_{\theta'}(\vx) - \max_{\vu \in N_{\epsilon}(\vx)}v_{\theta}(\vx) \Big|, \frac{\gamma_1}{4} \right) 
\end{aligned}
\end{equation}
With no loss of generality, we assume that $\max_{\vu \in N_{\epsilon}(\vx)}v_{\theta'}(\vx) \geq \max_{\vu \in N_{\epsilon}(\vx)}v_{\theta}(\vx)$ and denote $\vx^* = \argmax_{\vu \in N_{\epsilon}(\vx)}v_{\theta'}(\vx)$. Therefore, we have:
\begin{equation}
\begin{aligned}
\Big|\max_{\vu \in N_{\epsilon}(\vx)}v_{\theta'}(\vx) - \max_{\vu \in N_{\epsilon}(\vx)}v_{\theta}(\vx) \Big| &=  \max_{\vu \in N_{\epsilon}(\vx)}v_{\theta'}(\vx) - \max_{\vu \in N_{\epsilon}(\vx)}v_{\theta}(\vx) \\
&=  v_{\theta'}(\vx^*) - \max_{\vu \in N_{\epsilon}(\vx)}v_{\theta}(\vx) \\
&\leq  v_{\theta'}(\vx^*) - v_{\theta}(\vx^*) \leq \frac{\gamma_1}{4}\\
\end{aligned}
\end{equation}
Next, we consider that:
\begin{equation}
\begin{aligned}
|h_{\omega'}(\vx)-h_{\omega}(\vx)|_{\infty} = \max\Big( |0-0|, \vert f_{\omega'}(\vx) - f_{\omega}(\vx) \vert \Big) \leq \frac{\gamma_2}{4}
\end{aligned}
\end{equation}
\end{proof}

\begin{lemma}\label{lem:KLKL}  Let $\mathcal{V} = \{v_{\theta}:\sX \to [0,1] \;\vert\; \theta \in \Theta\}$ be a class of value functions $v_{\theta}(\vx) \in [0,1]$ and $\mathcal{C} = \{c_{\omega} = \sign \circ f_{\omega} \;\vert\; f_{\omega} : \sX \to \sR , \omega \in \Omega \}$ a class of classifiers (not necessarily neural networks). Let $P_1$ and $P_2$ be any two distributions over the parameters $\Theta$ and $\Omega$ (resp.) that are independent of the training data. Then, for any $\gamma_1,\gamma_2, \delta > 0$, with probability $\geq 1-\delta$ over the training set of size $m$, for any two posterior distributions $q_{\theta}$ and $q_{\omega}$ over $\Theta$ and $\Omega$ (resp.), such that $\mathbb{P}_{\theta' \sim q_{\theta}}[|v_{\theta'}(\vx)-v_{\theta}(\vx)| \leq \frac{\gamma_1}{4} ] \geq \frac{1}{\sqrt{2}}$ and $\mathbb{P}_{\omega' \sim q_{\omega}}[|f_{\omega'}(\vx)-f_{\omega}(\vx)| \leq \frac{\gamma_2}{4} ] \geq \frac{1}{\sqrt{2}}$, we have:
\begin{equation}
\begin{aligned}
&\mathbb{E}_{\vx \sim D} \one\left[ \max_{\vu \in N_{\epsilon}(\vx)} v_{\theta}(\vu)  >  v_{\theta}(\vx) \textnormal{ or } \sign(f_{\omega}(\vx)) \neq 1 \right] \\
\leq& \frac{1}{m} \sum^{m}_{i=1} \one\left[ \max_{\vu \in N_{\epsilon}(\vx)} v_{\theta}(\vu) - \gamma_1 > v_{\theta}(\vx_i) \textnormal{ or } \sign(f_{\omega}(\vx_i)-\gamma_2)  \neq 1 \right] \\
&+ 4\sqrt{\frac{\KL(q_{\theta}||P_1) + \KL(q_{\omega}||P_2) + \log(\frac{6m}{\delta})}{m-1}}
\end{aligned}
\end{equation}
\end{lemma}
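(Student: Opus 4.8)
The plan is to obtain Lemma~\ref{lem:KLKL} as a direct specialization of Lemma~\ref{lem:genhg}, using Lemma~\ref{lem:hg} to supply the hypothesis that Lemma~\ref{lem:genhg} requires. Since $D$ is a distribution of positive examples, I would take the target function to be the constant $y \equiv 1$. I would then instantiate the two generic classes exactly as in Lemma~\ref{lem:hg}: $\mathcal{G} = \{g_{\theta}(\vx) = (\max_{\vu \in N_{\epsilon}(\vx)} v_{\theta}(\vu),\; v_{\theta}(\vx)) \;\vert\; \theta \in \Theta\}$ and $\mathcal{H} = \{h_{\omega}(\vx) = (0,\; f_{\omega}(\vx)) \;\vert\; \omega \in \Omega\}$, and keep the same priors $P_1,P_2$ and posteriors $q_{\theta},q_{\omega}$ as in the statement.

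First I would check that the closeness hypothesis of Lemma~\ref{lem:genhg} holds, i.e.\ that $\mathbb{P}_{\theta',\omega'}[\,|g_{\theta'}(\vx)-g_{\theta}(\vx)|_{\infty} \leq \tfrac{\gamma_1}{4} \textnormal{ and } |h_{\omega'}(\vx)-h_{\omega}(\vx)|_{\infty} \leq \tfrac{\gamma_2}{4}\,] \geq \tfrac12$ (read uniformly over $\vx \in \sX$, as in the proof of Lemma~\ref{lem:genhg}). By Lemma~\ref{lem:hg} this probability is at least $\mathbb{P}_{\theta'}[\,|v_{\theta'}(\vx)-v_{\theta}(\vx)| < \tfrac{\gamma_1}{4}\,]\cdot\mathbb{P}_{\omega'}[\,|f_{\omega'}(\vx)-f_{\omega}(\vx)| < \tfrac{\gamma_2}{4}\,]$, and by the two assumptions of Lemma~\ref{lem:KLKL} each factor is at least $\tfrac{1}{\sqrt2}$, so the product is at least $\tfrac12$ (passing from strict to non-strict inequalities only helps).

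With the hypothesis verified, I would apply Lemma~\ref{lem:genhg} to get
\begin{equation}
L_{0,0}[\theta,\omega] \;\leq\; \hat{L}_{\gamma_1,\gamma_2}[\theta,\omega] + 4\sqrt{\frac{\KL(q_{\theta}||P_1) + \KL(q_{\omega}||P_2) + \log\!\left(\tfrac{6m}{\delta}\right)}{m-1}},
\end{equation}
and then unwind the generic losses for the chosen $g,h$ and $y\equiv 1$: the per-example loss becomes $e_{\gamma_1,\gamma_2}(\vx;\theta,\omega) = \one[\max_{\vu \in N_{\epsilon}(\vx)} v_{\theta}(\vu) - \gamma_1 > v_{\theta}(\vx)] \lor \one[f_{\omega}(\vx) < -\gamma_2]$. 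Taking $\gamma_1=\gamma_2=0$ identifies $L_{0,0}[\theta,\omega]$ with the left-hand side of Lemma~\ref{lem:KLKL}, since the event $\{f_{\omega}(\vx)<0\}$ is $\{\sign(f_{\omega}(\vx))\neq 1\}=\{c_{\omega}(\vx)\neq 1\}$; and for the empirical term, $\gamma_2 \geq 0$ gives $f_{\omega}(\vx_i) < -\gamma_2 \Rightarrow f_{\omega}(\vx_i) - \gamma_2 < 0 \Rightarrow \sign(f_{\omega}(\vx_i)-\gamma_2) \neq 1$, so $\hat{L}_{\gamma_1,\gamma_2}[\theta,\omega] \leq \tfrac1m\sum_{i=1}^m \one[\max_{\vu \in N_{\epsilon}(\vx_i)} v_{\theta}(\vu) - \gamma_1 > v_{\theta}(\vx_i) \textnormal{ or } \sign(f_{\omega}(\vx_i)-\gamma_2)\neq 1]$. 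Chaining these with the displayed bound gives the claim.

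Essentially all of the work is already packaged in Lemmas~\ref{lem:genhg} and~\ref{lem:hg}, so the main obstacle, such as it is, is purely the bookkeeping in this last step: matching the margin conventions (the generic margin loss naturally produces the threshold $f_{\omega}<-\gamma_2$, which must be weakened to the $\sign(f_{\omega}-\gamma_2)\neq 1$ form used in the statement), keeping track of which inequalities are strict, and reading the perturbation events uniformly over $\sX$. The assumption $v_{\theta}:\sX \to [0,1]$ is used only to keep the construction in Lemma~\ref{lem:hg} and the PAC-Bayes step well-posed.
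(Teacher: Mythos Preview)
Your proposal is correct and follows essentially the same route as the paper: instantiate $\mathcal{G},\mathcal{H}$ as in Lemma~\ref{lem:hg}, use the $\tfrac{1}{\sqrt2}\cdot\tfrac{1}{\sqrt2}\geq\tfrac12$ product to verify the hypothesis of Lemma~\ref{lem:genhg}, apply that lemma with $y\equiv 1$, and then unwind the definitions of $g_\theta,h_\omega$. If anything you are slightly more careful than the paper in the last step, explicitly noting that the generic margin condition $f_\omega(\vx_i)<-\gamma_2$ is stronger than $\sign(f_\omega(\vx_i)-\gamma_2)\neq 1$, so the empirical term only increases under the substitution; the paper simply writes the rephrased form directly.
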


\begin{proof} Let $\mathcal{G}$ and $\mathcal{H}$ be as in Lem.~\ref{lem:hg}. By Lem.~\ref{lem:hg} and our assumption,
\begin{equation}
\begin{aligned}
\mathbb{P}_{\theta',\omega'}\left[\max_{\vx \in \sX} |g_{\theta'}(\vx) - g_{\theta}(\vx)|_{\infty} < \frac{\gamma_1}{4} \textnormal{ and } \max_{\vx \in \sX} |h_{\omega'}(\vx) - h_{\omega}(\vx)|_{\infty} < \frac{\gamma_2}{4} \right] \geq \frac{1}{2}
\end{aligned}
\end{equation}
We note that all of the samples in $D$ are positive. Therefore, by Lem.~\ref{lem:genhg}, with probability at least $1-\delta$, we have:
\begin{equation}\label{eq:ghkl}
\begin{aligned}
&\mathbb{E}_{\vx \sim D} \one\Big[ g_{\theta}(\vx)[-1] > g_{\theta}(\vx)[1] \textnormal{ or }  h_{\omega}(\vx)[-1] > h_{\omega}(\vx)[1] \Big] \\
\leq& \frac{1}{m} \sum^{m}_{i=1} \one\Big[ g_{\theta}(\vx_i)[-1] - \gamma_1 > g_{\theta}(\vx_i)[1] \textnormal{ or }  h_{\omega}(\vx_i)[-1] - \gamma_2 > h_{\omega}(\vx_i)[1] \Big] \\
&+ 4\sqrt{\frac{\KL(q_{\theta}||P_1) + \KL(q_{\omega}||P_2) + \log(\frac{6m}{\delta})}{m-1}}
\end{aligned}
\end{equation}
By the definition of $g_{\theta}$: $g_{\theta}(\vx)[-1] = \max_{\vu \in N_{\epsilon}(\vx)} v_{\theta}(\vu)$, $g_{\theta}(\vx)[1] = v_{\theta}(\vx)$. In addition, by the definition of $h_{\omega}$: $h_{\omega}(\vx)[-1] = 0$ and $h_{\omega}(\vx)[1] = f_{\omega}(\vx)$. Therefore, we can rephrase Eq.~\ref{eq:ghkl} as follows:
\begin{equation}
\begin{aligned}
&\mathbb{E}_{\vx \sim D} \one\left[ \max_{\vu \in N_{\epsilon}(\vx)} v_{\theta}(\vu) > v_{\theta}(\vx) \textnormal{ or }  \sign(f_{\omega}(\vx))  \neq 1 \right] \\
\leq& \frac{1}{m} \sum^{m}_{i=1} \one\left[ \max_{\vu \in N_{\epsilon}(\vx)} v_{\theta}(\vu) - \gamma_1 > v_{\theta}(\vx_i) \textnormal{ or }  \sign(f_{\omega}(\vx_i)-\gamma_2)  \neq 1 \right] \\
&+ 4\sqrt{\frac{\KL(q_{\theta}||P_1) + \KL(q_{\omega}||P_2) + \log(\frac{6m}{\delta})}{m-1}}
\end{aligned}
\end{equation}
\end{proof}

\begin{proof}[Proof of Lem.~\ref{lem:genBound}] We apply Lem.~\ref{lem:KLKL} with priors $P_1,P_2$ and posteriors $q_{\theta},q_{\omega}$, distributions similar the proof of Thm.~1 in~\citep{neyshabur2018a}. In their proof, they show that for their selection of prior and posterior distributions: (1) $\mathbb{P}_{\theta'\sim q_{\theta} }\left[\max_{\vx \in \sX} |v_{\theta}(\vx) - v_{\theta}(\vx)| < \frac{\gamma_1}{4}\right] \geq \frac{1}{2}$ holds and (2) $\KL(q_{\theta}||P) = \mathcal{O}(dr^2B^2q\log(rq) C(\theta) /\gamma^2_1)$. By taking $\gamma_1$ to be half of the value the use and therefore, $\sigma$ (from their proof) to be half of the value they use as well, we obtain $\mathbb{P}_{\theta'\sim q_{\theta} }\left[\max_{\vx \in \sX} |v_{\theta}(\vx) - v_{\theta}(\vx)| < \frac{\gamma_1/2}{4}\right] \geq \frac{1}{\sqrt{2}}$ and $\KL(q_{\theta}||P) = \mathcal{O}(dr^2B^2q\log(rq) C(\theta) /\gamma^2_1)$. We select $P_2$ and $q_{\omega}$ in a similar fashion. In particular, we can replace the penalty term in Lem.~\ref{lem:KLKL} as follows:
\begin{equation}
\begin{aligned}
&4\sqrt{\frac{ \KL(q_{\theta}||P_1) + \KL(q_{\omega}||P_2) + \log\left(\frac{6m}{\delta}\right)}{m-1}} \\
\in &\mathcal{O}\left(\sqrt{\frac{B^2(r^2q_1\log(rq_1) C(v_\theta)/\gamma^2_1 + s^2q_2\log(sq_2) C(f_\omega)/\gamma^2_2) + \log\left(\frac{m}{\delta}\right)}{m}} \right)\\
\end{aligned}
\end{equation}
\end{proof}
\section{Additional Figures}
\label{fig:mofig}

\begin{figure*}[h]
\centering
  \begin{tabular}{cc|cc}
  \multicolumn{2}{c}{CIFAR-10} & \multicolumn{2}{c}{MNIST} \\
\includegraphics[width=0.23\linewidth, trim={50px 28px 0 150px}, clip]{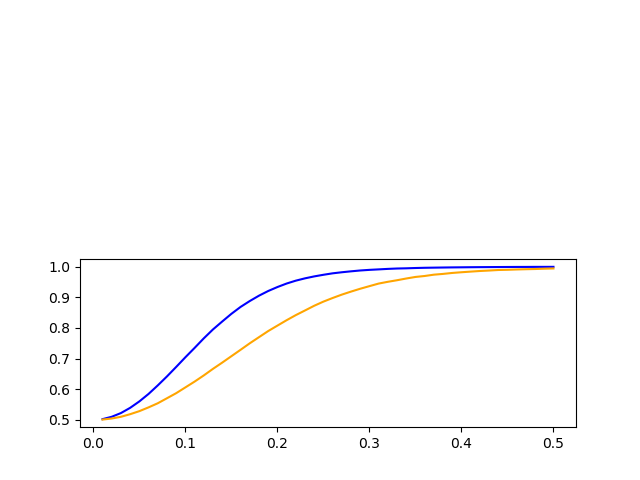}&
\includegraphics[width=0.23\linewidth, trim={50px 28px 0 150px}, clip]{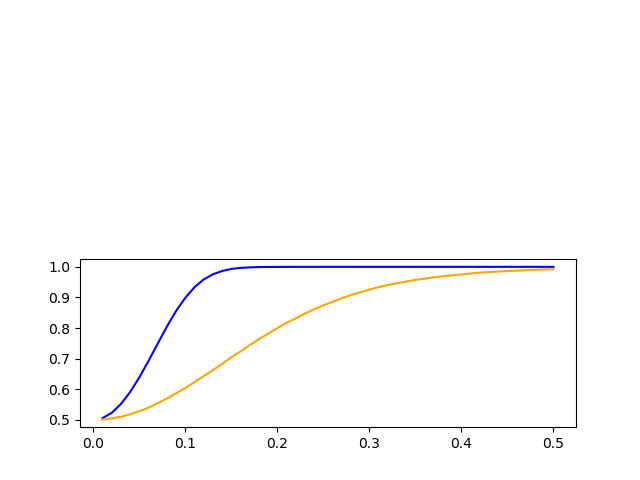}&
\includegraphics[width=0.23\linewidth, trim={50px 28px 0 150px}, clip]{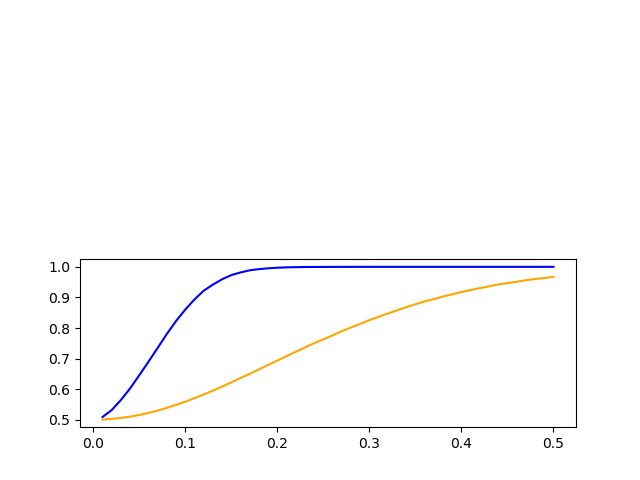}& 
\includegraphics[width=0.23\linewidth, trim={50px 28px 0 150px}, clip]{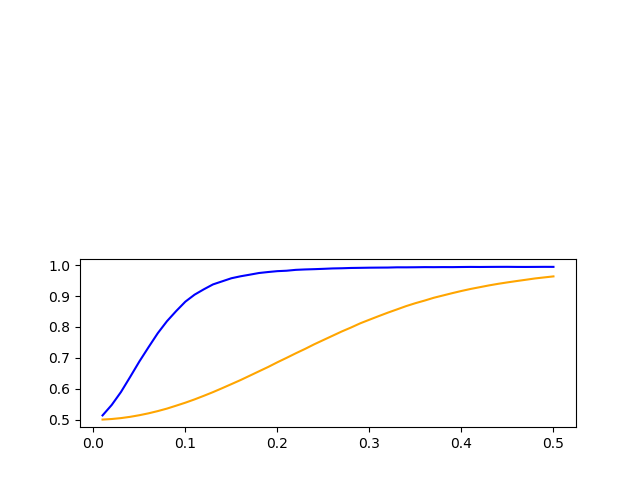}\\
(Airplane) & (Automobile) & (0) & (1)\\
\includegraphics[width=0.23\linewidth, trim={50px 28px 0 150px}, clip]{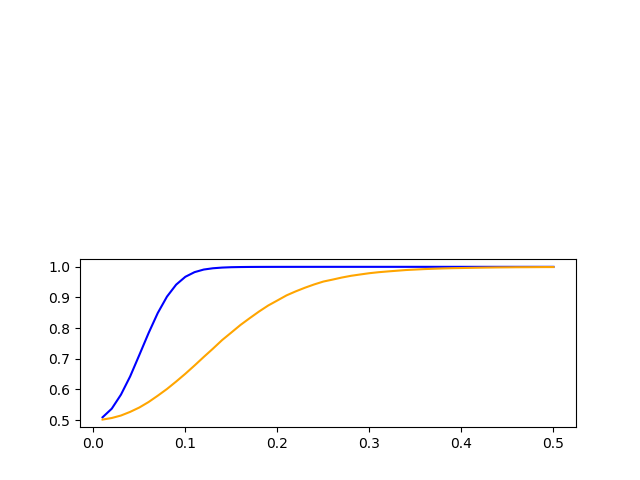}&
 \includegraphics[width=0.23\linewidth, trim={50px 28px 0 150px}, clip]{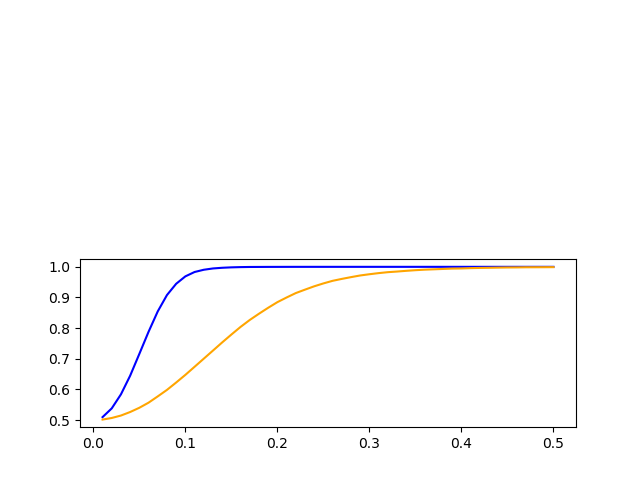}& 
 \includegraphics[width=0.23\linewidth, trim={50px 28px 0 150px}, clip]{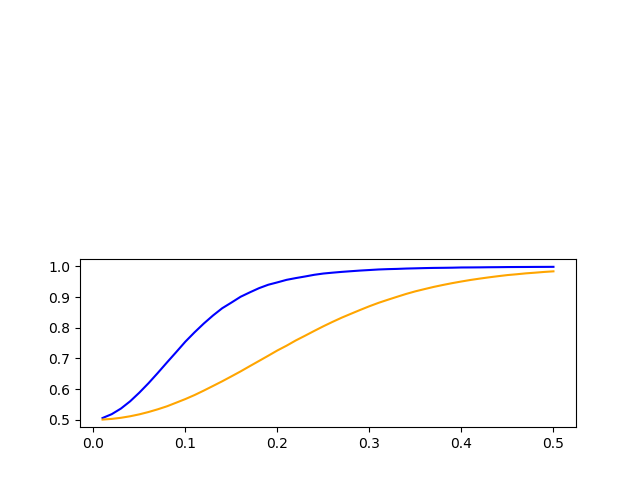}&
 \includegraphics[width=0.23\linewidth, trim={50px 28px 0 150px}, clip]{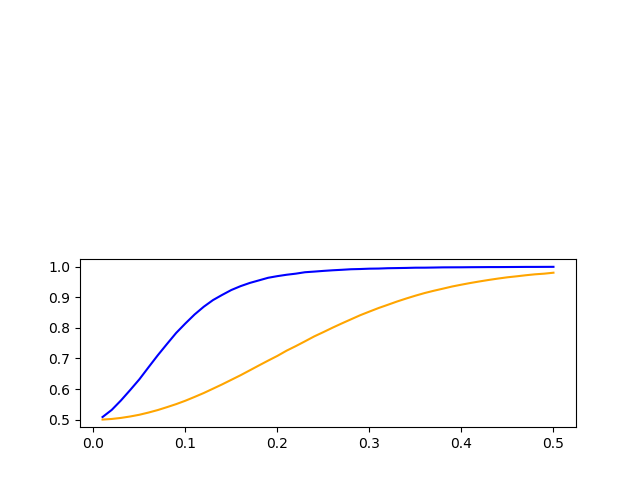}\\
 (Bird) & (Cat)  & (2)  & (3)\\
  \includegraphics[width=0.23\linewidth, trim={50px 28px 0 150px}, clip]{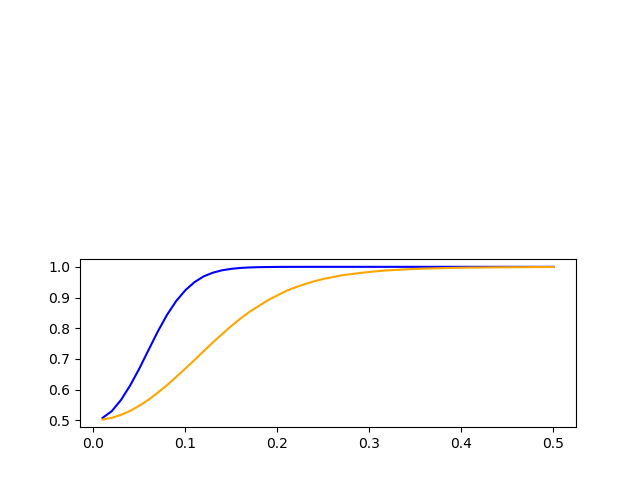} &
  \includegraphics[width=0.23\linewidth, trim={50px 28px 0 150px}, clip]  {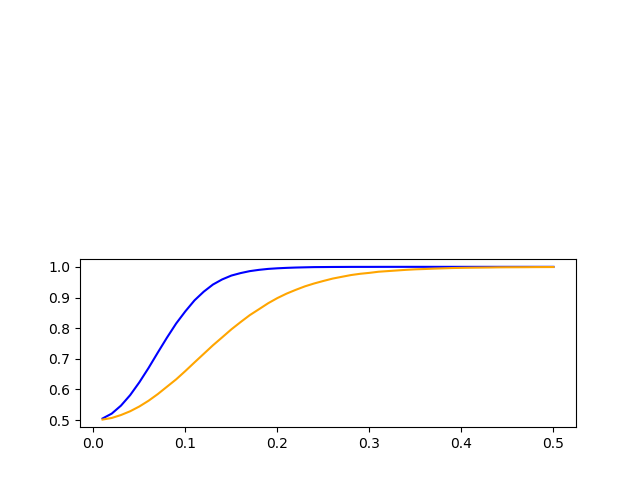} &
    \includegraphics[width=0.23\linewidth, trim={50px 28px 0 150px}, clip]{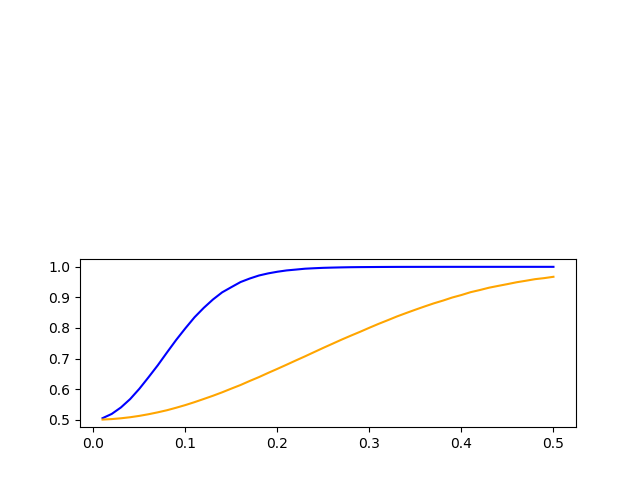} &
  \includegraphics[width=0.23\linewidth, trim={50px 28px 0 150px}, clip]  {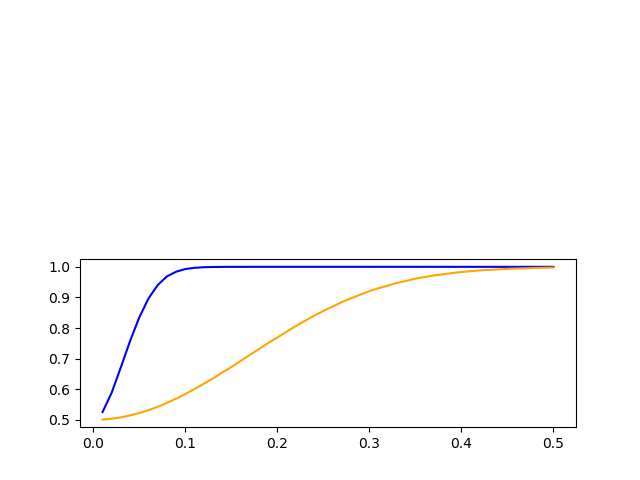} \\
  (Deer) & (Dog) & (4) & (5)\\
\includegraphics[width=0.23\linewidth, trim={50px 28px 0 150px}, clip]{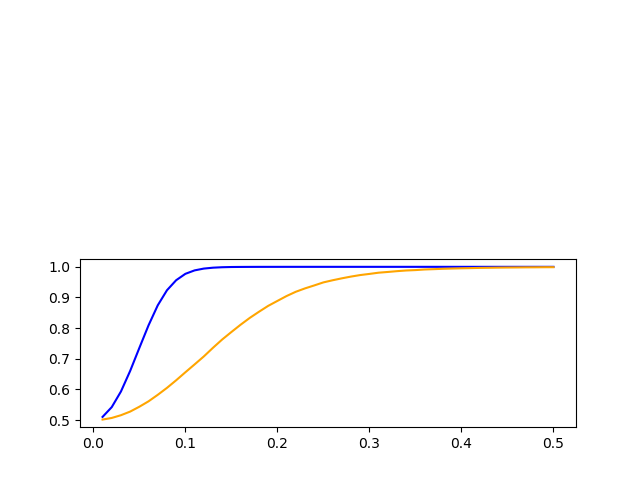} & 
\includegraphics[width=0.23\linewidth, trim={50px 28px 0 150px}, clip]{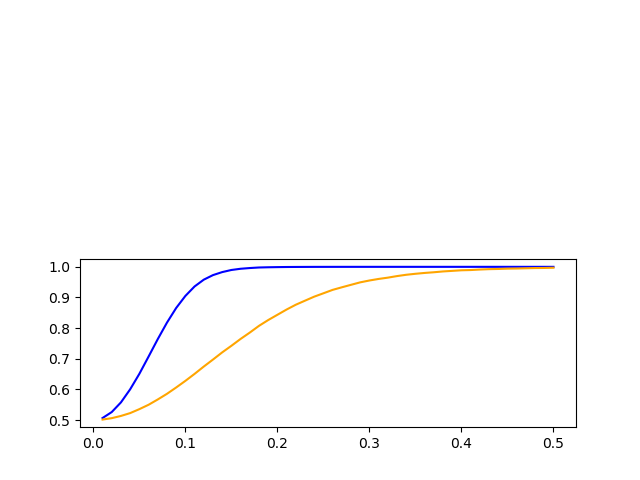} & 
\includegraphics[width=0.23\linewidth, trim={50px 28px 0 150px}, clip]{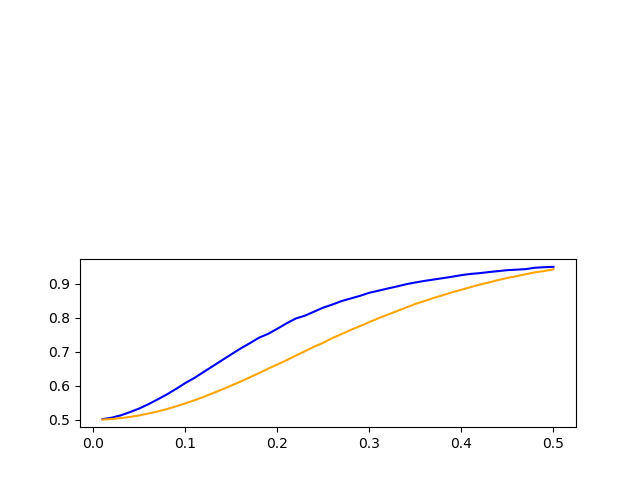} & 
\includegraphics[width=0.23\linewidth, trim={50px 28px 0 150px}, clip]{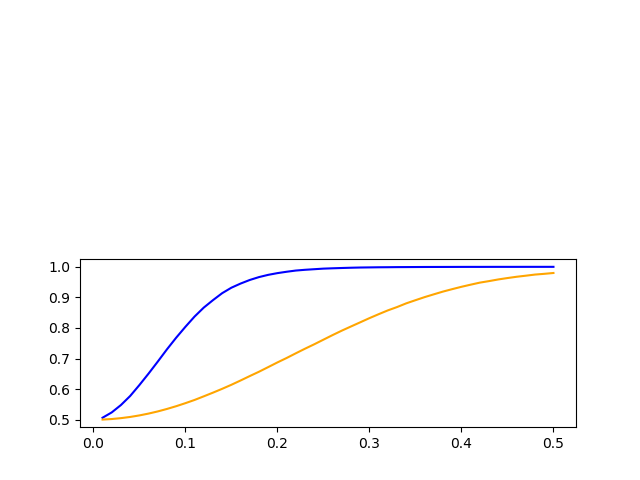} \\
(Frog) & (Horse) & (6) & (7)\\
 \includegraphics[width=0.23\linewidth, trim={50px 28px 0 150px}, clip]{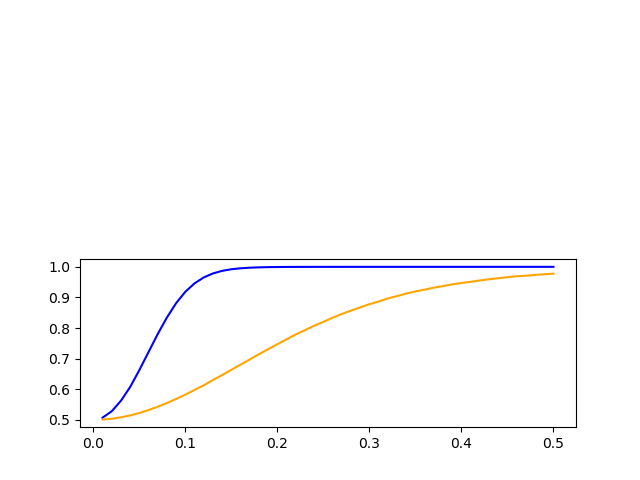}&
  \includegraphics[width=0.23\linewidth, trim={50px 28px 0 150px}, clip]{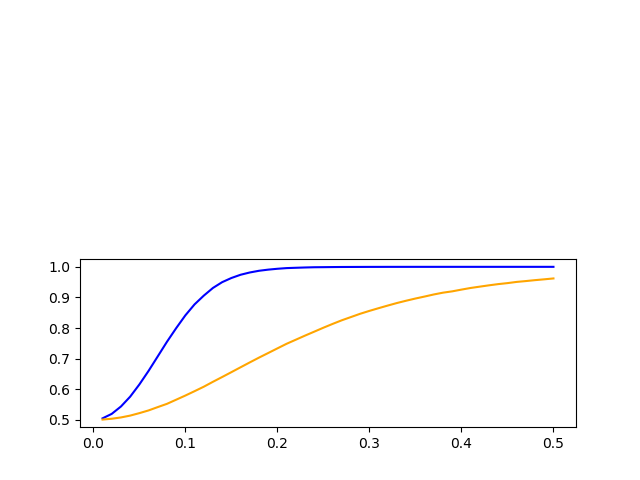} & 
   \includegraphics[width=0.23\linewidth, trim={50px 28px 0 150px}, clip]{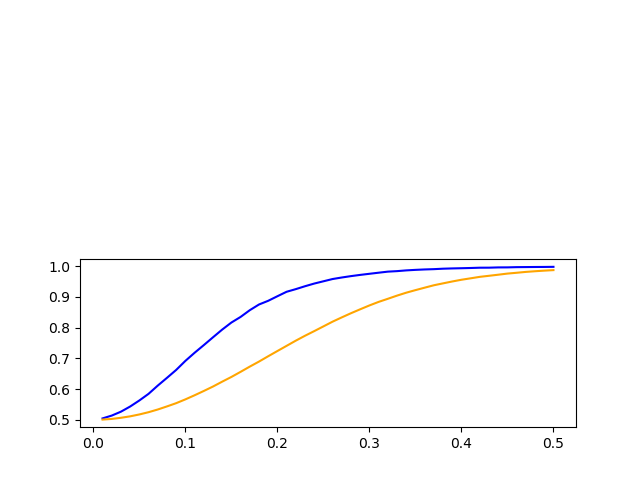}&
  \includegraphics[width=0.23\linewidth, trim={50px 28px 0 150px}, clip]{cifar_noize/hc_digit__9__only_fulls.png} \\
   (Ship) & (Truck) & (8) &(9)\\
\end{tabular}
\caption{Same as Fig.~\ref{fig:noize_ones}, but where the images are taken from the test set of all classes, regardless of the single class used for training.  }
  \label{fig:noize_full}
\end{figure*}

\end{document}